\documentclass{article}

\usepackage{microtype}
\usepackage{graphicx}
\usepackage{subcaption}
\usepackage{booktabs} % for professional tables

\usepackage{hyperref}

\usepackage[preprint]{icml2026}

\usepackage{amsmath}
\usepackage{amssymb}
\usepackage{mathtools}
\usepackage{amsthm}
\usepackage{booktabs}      % 用于 \toprule, \midrule, \bottomrule
\usepackage{multirow}      % 用于跨行
\usepackage[table]{xcolor} % 用于单元格着色
\usepackage{amsmath}       % 用于数学符号
\usepackage{pifont}
\usepackage{graphicx}
\usepackage{float}
\usepackage{subcaption}

\usepackage[capitalize,noabbrev]{cleveref}

\theoremstyle{plain}
\newtheorem{theorem}{Theorem}[section]

\theoremstyle{definition}

\theoremstyle{remark}

\usepackage[textsize=tiny]{todonotes}
\usepackage{wrapfig}
\usepackage{booktabs}
\begin{document}
\twocolumn[
  \icmltitle{Rethinking Representativeness and Diversity in Dynamic Data Selection}

  % It is OKAY to include author information, even for blind submissions: the
  % style file will automatically remove it for you unless you've provided
  % the [accepted] option to the icml2026 package.

  % List of affiliations: The first argument should be a (short) identifier you
  % will use later to specify author affiliations Academic affiliations
  % should list Department, University, City, Region, Country Industry
  % affiliations should list Company, City, Region, Country

  % You can specify symbols, otherwise they are numbered in order. Ideally, you
  % should not use this facility. Affiliations will be numbered in order of
  % appearance and this is the preferred way.

  \begin{icmlauthorlist}
    \icmlauthor{Yuzhe Zhou}{yyy}
    \icmlauthor{Zhenglin Hua}{yyy}
    \icmlauthor{Haiyun Guo}{comp}
    \icmlauthor{Yuheng Jia}{yyy}
    %\icmlauthor{}{sch}
    %\icmlauthor{}{sch}
  \end{icmlauthorlist}

  \icmlaffiliation{yyy}{Southeast University, Nanjing, China}
  \icmlaffiliation{comp}{Chinese Academy of Sciences, Beijing, China}
  \icmlcorrespondingauthor{Haiyun Guo}{haiyun.guo@nlpr.ia.ac.cn}
  \icmlcorrespondingauthor{Yuheng Jia}{yhjia@seu.edu.cn}

  % You may provide any keywords that you find helpful for describing your
  % paper; these are used to populate the "keywords" metadata in the PDF but
  % will not be shown in the document
  \icmlkeywords{Machine Learning, ICML}

  \vskip 0.3in
]
\printAffiliationsAndNotice{}
\begin{abstract}
Dynamic data selection accelerates training by sampling a changing subset of the dataset while preserving accuracy. We rethink two core notions underlying sample evaluation: representativeness and diversity. Instead of local geometric centrality, we define representativeness as coverage of dataset-level common or high-frequency feature factors. Instead of within-subset dispersion, we define diversity at the process level, requiring the selection trajectory to gradually include complementary rare factors over training. Based on this view, we propose a dynamic selection framework with three components. First, we score representativeness in a plug-in feature space to prioritize samples covering frequent factors. We instantiate this with a sparse autoencoder trained on the target dataset, using sparse unit activations to summarize both individual samples and dataset-wide factor statistics. Second, we realize process-level diversity by combining rare-factor sampling with a Usage-Frequency Penalty that promotes sample rotation, provably discourages monopoly, and reduces gradient bias. Third, we couple the two-dimensional scoring with a smooth scheduler that transitions selection from core-pattern consolidation to rare-factor exploration, without extra gradients, influence estimates, or second-order computations on the training model. Extensive experiments on five benchmarks across vision and text tasks demonstrate improved accuracy–efficiency trade-offs across models. Our method matches or exceeds full-data accuracy with over 2× training acceleration. \emph{Code will be released.}

\end{abstract}    
\section{Introduction}
\label{sec:intro}

\begin{figure}[t]
    \centering
    \includegraphics[width=1\linewidth]{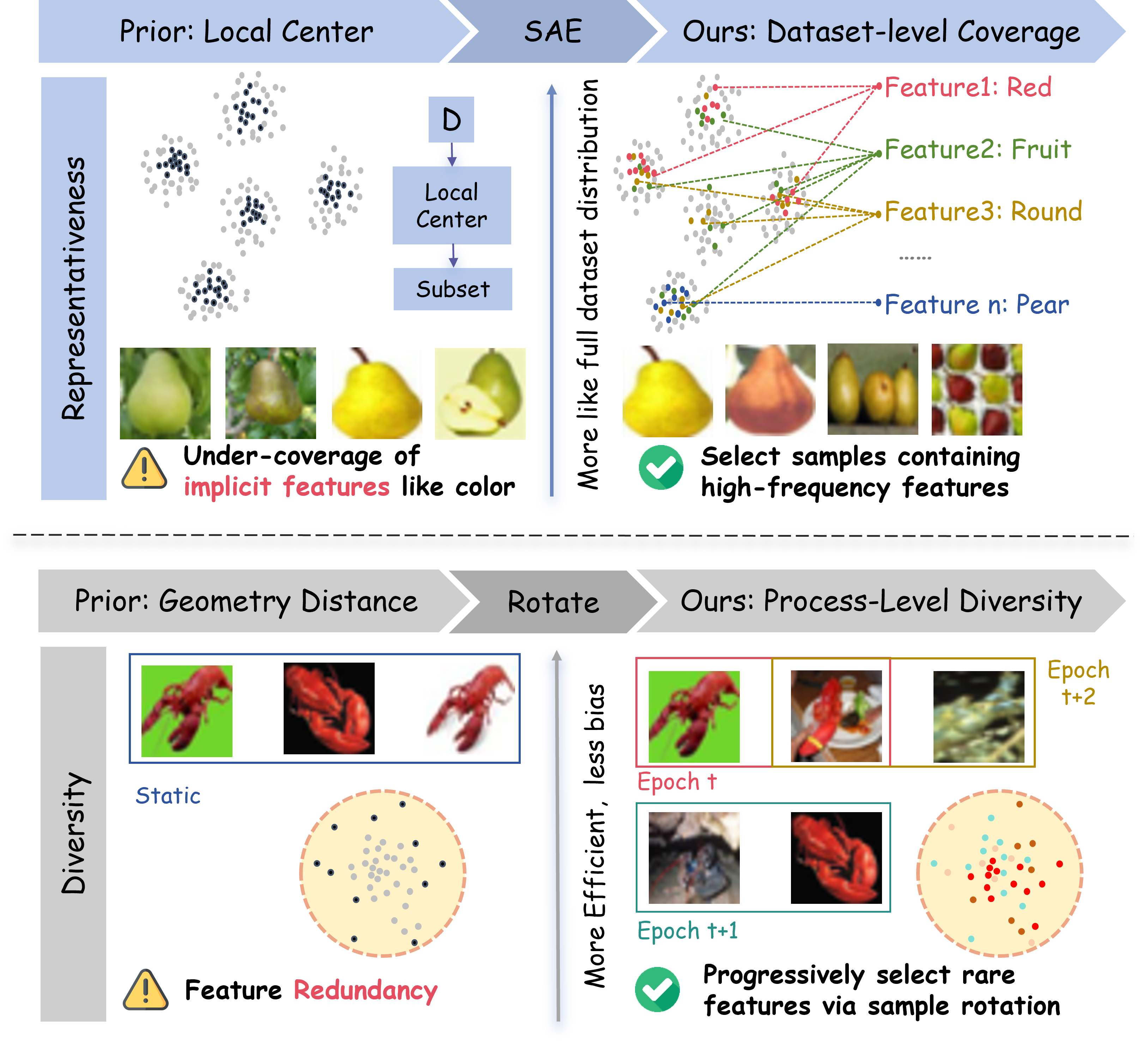}
    \caption{\textbf{Conceptual comparison with prior data selection methods.}
Previous methods relying on 
geometry-based metrics may overemphasize local centrality and miss implicit feature factors. Our framework measures representativeness as dataset-level coverage of common or high-frequency factors, and enforces process-level diversity by promoting sample rotation across epochs rather than optimizing a single static subset.
}
%     \caption{\textbf{Conceptual comparison with prior data selection.}
% Geometric heuristics may miss informative distributional variations when estimating representativeness and can introduce redundancy when enforcing diversity.
% Our framework instead measures representativeness via (weighted) coverage of common/high-frequency feature factors, and enforces process-level diversity with sample rotation to cover complementary rare patterns over time and reduce selection bias.}
    \label{fig:111}
\end{figure}

The remarkable performance of deep learning models often comes at the cost of prohibitive computation overhead from training on massive datasets \cite{10655294,touvron2023llamaopenefficientfoundation,touvron2021trainingdataefficientimagetransformers}. Data selection aims to extract a small yet high-value subset from a large corpus, thereby reducing cost and wall-clock time while maintaining or even improving accuracy  \cite{albalak2024surveydataselectionlanguage}.

 Early work predominantly adopted static data selection strategies \cite{xie2023dataselectionlanguagemodels,10.1145/1007352.1007400,sener2018activelearningconvolutionalneural,mirzasoleiman2020coresetsdataefficienttrainingmachine}, i.e., choosing a fixed subset before training that balances representativeness and diversity \cite{yang2024clip,agarwal2020contextualdiversityactivelearning,ge2024clusteringrankingdiversitypreservedinstruction,yu2025masteringcollaborativemultimodaldata,du2025disentanglingrolesrepresentationselection}. Its fundamental limitation is a mismatch between a single, static view of the data distribution and the model’s evolving capacity and learning needs across training. In practice, to preserve final accuracy, the subset size usually cannot be reduced aggressively (e.g., performance may degrade significantly unless about 70\% of the data is retained \cite{Tan2023DataPV,xia2023moderate,yang2024clip,mirzasoleiman2020coresetsdataefficienttrainingmachine}), which sharply limits potential speedups. 
This motivated dynamic data selection methods \cite{raju2021acceleratingdeeplearningdynamic,qin2024infobatch,hassan2025rcap,Zheng2025RSD15KAL} that adaptively select different samples according to the model’s current state (e.g., loss, gradients).

However, existing methods face two fundamental challenges in estimating representativeness and diversity. First, many approaches rely on local geometric proxies in a feature space. The representativeness is approximated by distance- or centroid-coverage criteria, while diversity is encouraged by enforcing dispersion or low similarity within a selected subset. Although convenient, these objectives primarily preserve local neighborhood structure and do not explicitly control dataset-level coverage of informative factors. As a result, a subset can cover cluster centers yet still under-cover common but important attribute factors (e.g., global color distribution or material-related latent attributes) that are not aligned with geometric centrality. Second, dynamic selection often re-scores examples using instantaneous model signals and samples greedily without process-level constraints. Under noisy scores, a small set of high-scoring instances can be repeatedly chosen, creating a sample-monopoly effect that shifts the effective sampling distribution over time and yields biased gradient estimates relative to full-data risk minimization. 
% However, current methods face two fundamental problems when estimating sample representativeness and diversity. First, many approaches rely on local geometric proxies to quantify sample value. Representativeness is often approximated by distance- or coverage-based criteria, and diversity is enforced by dispersion or similarity constraints within a subset. While convenient, these proxies mainly preserve local neighborhood structure in the feature space. Consequently, the selected subset may not match dataset-level coverage of informative factors. In particular, clustering provides no explicit control over global attribute coverage (e.g., the overall distribution of colors or material-related latent attributes),which can cause common yet important patterns to be missed. Second, dynamic selection typically re-scores examples using instantaneous model signals and samples greedily without process-level constraints. When such signals are noisy, a small set of high-scoring instances can be repeatedly selected, creating a sample-monopoly effect that shifts the effective sampling distribution and induces biased gradient estimates away from full-data risk minimization. 

To address the above issues, we propose a dynamic selection framework that scores samples in a plug-in feature space and enforces process-level diversity control throughout training. The key idea is to estimate representativeness as dataset-level coverage of high-frequency feature factors, while promoting diversity through rotation over time so that the cumulatively selected data progressively includes complementary rare factors during training. Specifically, our framework consists of three tightly coupled modules. First, we build a representativeness estimator from sparse unit activations in the chosen feature space: we train a sparse autoencoder to obtain sparse activation patterns, and score a sample by how well it covers frequently activated factors, rather than by local geometric centrality. Second, we operationalize process-level diversity by quantifying factor rarity and introducing a usage-frequency penalty that discourages repeated selection, prevents sample monopoly, and mitigates the resulting gradient bias. Third, we design a smooth curriculum scheduler that balances the two scores over training, enabling a lightweight transition from core-pattern consolidation to rare-factor exploration. The feature extractor is treated as a plug-in module: we use CLIP by default following prior practice~\cite{yang2024clip}, but downstream-model can also be used to train the sparse autoencoder with comparable performance, indicating the framework is not tied to a specific encoder.

\begin{figure*}
    \centering
    \includegraphics[width=1\linewidth]{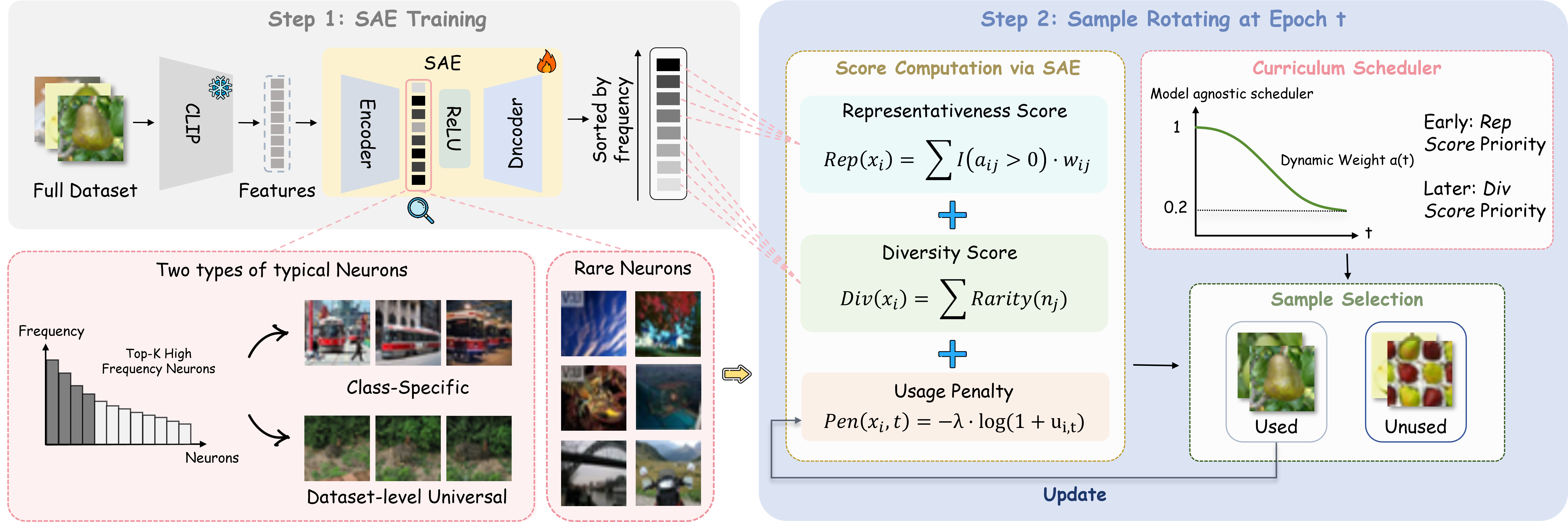}
    \caption{\textbf{Our framework for dynamic data selection.}
A plug-in feature encoder (CLIP by default) maps inputs to a fixed feature space, where an SAE yields sparse unit activations and dataset-wide statistics.
At epoch $t$, each example is scored by representativeness (weighted coverage of high-frequency factors), diversity (factor rarity), and a usage-frequency penalty that discourages repeated selection and enforces rotation.
A scheduler $\alpha(t)$ smoothly balances representativeness and diversity, transitioning from core-pattern consolidation to rare-feature exploration.}
    \label{fig:2}
\end{figure*}

Our framework is model-agnostic and applicable across modalities and tasks. We evaluate on five benchmarks, including CIFAR-10, CIFAR-100, Tiny-ImageNet, ImageNet-1K, and RSD\_15K, using diverse architectures such as ResNet-18/50, ViT, and VGG, and observe consistent improvements in accuracy–efficiency trade-offs.

% Our method is applicable to different modalities and tasks. We evaluate on five benchmarks—CIFAR-10/100, Tiny-ImageNet, ImageNet-1K, and RSD\_15K—using diverse architectures including ResNet-18/50, ViT, and VGG.
Our contributions can be summarized as follows:

\begin{itemize}
\item[$\bullet$] \textbf{Representativeness as high-frequency feature coverage.} We rethink representativeness as dataset-level coverage of common or high-frequency feature factors, rather than proximity to geometric centers, using sparse-unit activations in a plug-in feature space.
\item[$\bullet$] \textbf{Diversity as process-level rotation.} We rethink diversity as a training-process constraint by prioritizing complementary rare factors and enforcing rotation with a Usage-Frequency Penalty, reducing sample monopoly and its induced gradient bias, with a theoretical anti-monopoly guarantee on the ranking dynamics.
\item[$\bullet$] \textbf{Curriculum scheduler for balancing representativeness and diversity. } We introduce a lightweight curriculum scheduler that smoothly transitions the selection focus from high-frequency factor coverage (representativeness) to process-level diversity (via rotation) across epochs.
\end{itemize}

\section{Related Works}
\label{sec:related}

Data selection reduces training cost by identifying informative subsets while preserving model performance.
Prior work is typically categorized into static and dynamic strategies.

\textbf{Static selection} fixes a subset of samples before training.
Representative directions include coreset formulations and diversity-preserved heuristics that encourage coverage or dispersion in a representation space~\cite{pan2024gdiggradientbaseddiversehighquality,yang2025diversitydrivendataselectionlanguage,agarwal2020contextualdiversityactivelearning}.

\textbf{Dynamic selection} updates the subset along training.
Many dynamic methods rely on model-dependent feedback like loss across epochs~\cite{raju2021acceleratingdeeplearningdynamic,qin2024infobatch}.
These methods have achieved strong acceleration in practice and motivate continued study of principled scoring and scheduling.

Different from previous methods, our work starts by rethinking how representativeness and diversity are operationalized in dynamic selection.
Instead of using local geometric proxies for representativeness, we quantify it as weighted coverage of common, dataset-level factors in a plug-in feature space.
For diversity, we focus on the \emph{training process} by combining rare-factor sampling with a usage-frequency penalty that encourages sample rotation, which in turn helps alleviate long-term selection bias.
Due to space constraints, a more extensive discussion of related work is provided in the Apendix~\ref{sec:related2}.

\section{Method}
\label{sec:method}

\subsection{Problem Setup and Overview}
\label{subsec:setup}

Let $\mathcal{D}=\{d_i\}_{i=1}^{|\mathcal{D}|}$ be a labeled dataset, where $x_i$ is an input instance and $y_i$ its label.
We perform dynamic data selection once per epoch.
At epoch $t$, we construct a subset $\mathcal{S}(t)\subseteq\mathcal{D}$ with selection ratio $p$ by ranking examples using a time-dependent score $\mathcal{H}(d_i,t)$:
\begin{equation}
    \mathcal{S}(t)=\underset{\mathcal{S}\subseteq\mathcal{D}}{\arg\max}\ \sum_{d_i\in\mathcal{S}} \mathcal{H}(d_i,t)
    \quad \text{s.t.}\quad \frac{|\mathcal{S}|}{|\mathcal{D}|}=p.
    \label{eq:dynamical_selection}
\end{equation}
We rethink two core notions for scoring examples: representativeness and diversity.
Representativeness is defined as weighted coverage of dataset-level common/high-frequency feature factors, moving beyond local geometric centrality.
Diversity is treated as a process-level constraint that promotes complementary rare-factor exposure over epochs, rather than dispersion within a single subset.
We combine the two signals with a smooth curriculum scheduler to obtain $\mathcal{H}(d_i,t)$.

\subsection{Feature Factors via Sparse Units}
\label{subsec:semantic_probe}

Our scoring relies on a set of \emph{sparse units} that serve as latent feature factors for characterizing each example.
Given an input $x_i$, we denote by $a_{ij}$ the activation of unit $n_j$ and define the active set
\begin{equation}
    \mathcal{A}_i \triangleq \left\{ j \mid \mathbb{I}(a_{ij}>0)=1 \right\}.
    \label{eq:active_set}
\end{equation}
These units provide a compact factorization of the chosen feature space and support dataset-wide statistics used by our representativeness and diversity scores.

\paragraph{Instantiation.}
We treat the feature extractor as a plug-in module.
By default, we follow \cite{yang2024clip}, use CLIP encoder to extract features and train a sparse autoencoder on the target dataset to obtain sparse unit activations $\{a_{ij}\}$.
We refer to this component as a \emph{sparse-unit probe}, which maps representations from the chosen feature space to sparse activations; the remaining selection pipeline remains unchanged when swapping the feature extractor.
We also train the probe on downstream-model as an alternative instantiation, with results reported in the Appendix~\ref{sec:feature}.

\subsection{Rethinking Representativeness: Coverage over High-Frequency Factors}
\label{subsec:rep}

We define representativeness to reflect \emph{dataset-level coverage} of common/high-frequency feature factors, instead of local geometric centrality.
This is motivated by the observation that purely geometry-based coverage can preserve local structure while still under-covering global, implicitly distributed attributes.

Let $\mathcal{F}$ be the set of common factors, instantiated as the top-$K$ sparse units with the highest activation frequency over the full dataset.
A sample is representative if it activates many factors in $\mathcal{F}$.
To avoid over-valuing ubiquitous factors that appear broadly across classes, we weight each factor by the inverse of its class coverage.
Formally, for unit $n_j\in\mathcal{F}$, let $c_j$ be the number of distinct classes in which $n_j$ is activated at least once and define $w_j=1/c_j$.
We score representativeness as
\begin{equation}
    Rep(i) \triangleq \sum_{j\in\mathcal{F}} \mathbb{I}(a_{ij}>0)\cdot w_j.
    \label{eq:representativeness}
\end{equation}
This score can be viewed as weighted high-frequency coverage: it emphasizes covering common factors while discounting those shared across many classes.

\paragraph{Distributional faithfulness via MMD.}
To quantify how closely a selected subset matches the full-data distribution, we use Maximum Mean Discrepancy (MMD) in the same representation space as the sparse-unit probe (CLIP embeddings by default).
Let $\phi(\cdot)$ denote this fixed feature extractor and let $\mathcal{V}=\{\phi(x_i)\}_{i=1}^{N}$ be the full set of features.
For a selected subset $\mathcal{S}$ of size $K$, define $\mathcal{U}=\{\phi(x)\mid x\in\mathcal{S}\}$.
Given a characteristic kernel $k(\cdot,\cdot)$, the squared MMD is
\begin{equation}
\mathrm{MMD}^2(\mathcal{U},\mathcal{V})
=
\Bigl\|
\frac{1}{K}\sum_{u\in\mathcal{U}}\psi(u)
-
\frac{1}{N}\sum_{v\in\mathcal{V}}\psi(v)
\Bigr\|_{\mathcal{H}}^2,
\label{eq:mmd_def}
\end{equation}
where $\psi(\cdot)$ is the implicit feature map associated with $k$ and $\mathcal{H}$ is the corresponding RKHS.
We compute MMD using the standard unbiased estimator (see Appendix~\ref{subsec:mmd_appendix} for details).

\paragraph{Protocol.}
For each selection ratio $p$ (thus $K=\lfloor pN\rfloor$), we compare \textbf{Rep-TopK} (top-$K$ by $Rep(i)$) against a geometry-based baseline \textbf{K-Center} in the same feature space.
Figure~\ref{fig:mmd} reports the resulting MMD values across selection ratios.

\begin{figure}[t]
    \centering
    \includegraphics[width=1.0\linewidth]{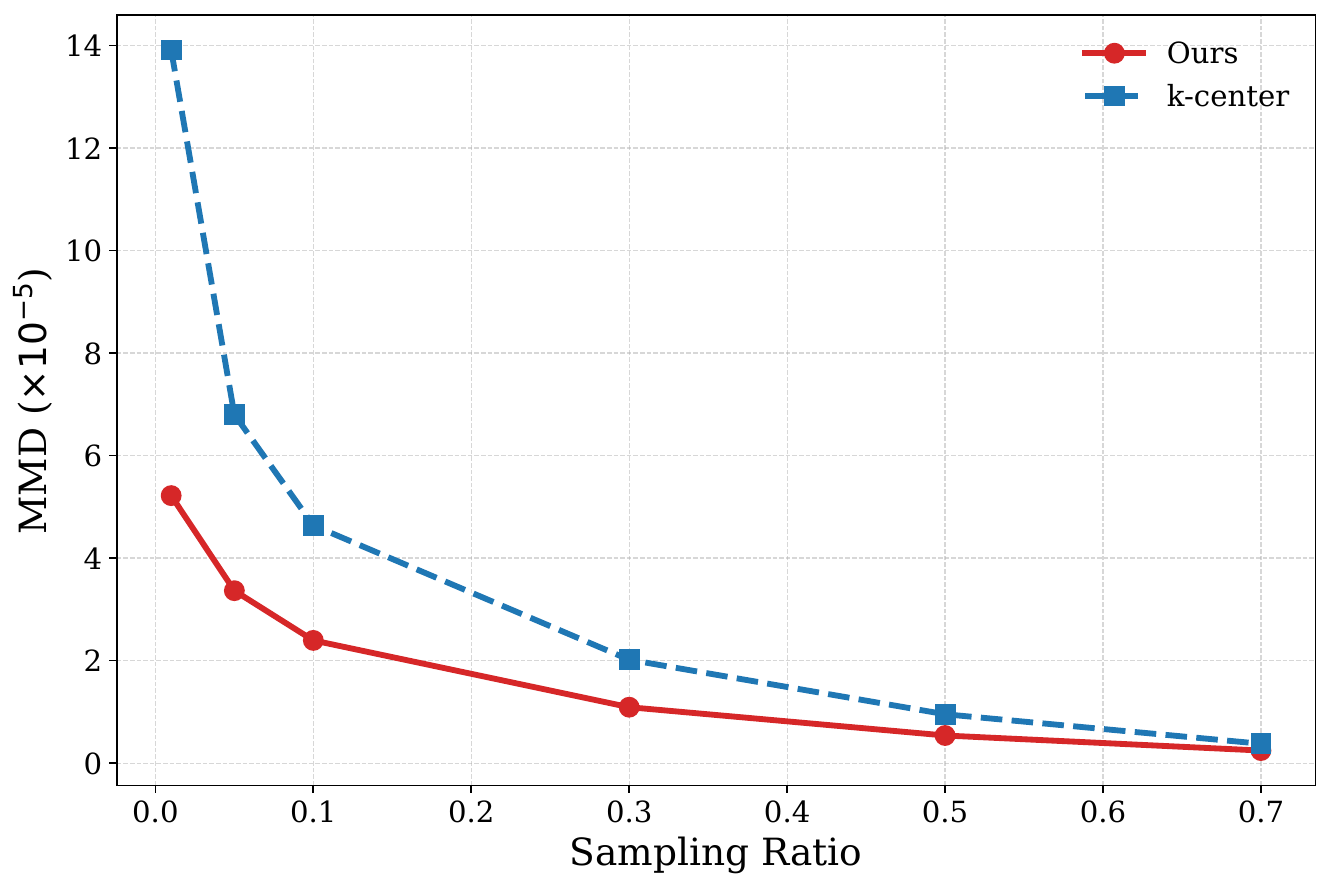}
    \caption{MMD comparison between subsets selected by our representativeness score (Rep-TopK) and the geometric baseline K-Center under different selection ratios. Lower is better.}
    \label{fig:mmd}
\end{figure}

\subsection{Rethinking Diversity: Process-Level Diversity}
\label{subsec:process_div}

Diversity is often defined within a single subset (e.g., geometric dispersion).
Under small selection ratios, a single subset can under-cover long-tail factors.
Moreover, in dynamic selection, a common failure mode is that a small set of high-scoring samples is repeatedly selected across epochs, resulting in poor factor coverage \emph{over time}.
We propose \textbf{Process-Level Diversity}, which encourages coverage throughout training by combining (i) rare-factor selection and (ii) explicit sample rotation.

\textbf{Rare-Factor Diversity.}
We measure the rarity of a sparse unit by the inverse of its dataset-level activation count:
\begin{equation}
    Rar(n_j) \triangleq \frac{1}{\sum_{i\in\mathcal{D}} \mathbb{I}(a_{ij}>0)}.
    \label{eq:rarity}
\end{equation}
A sample is diverse if it activates rare units. We define
\begin{equation}
    Div(i) \triangleq \sum_{j\in\mathcal{A}_i} Rar(n_j).
    \label{eq:diversity}
\end{equation}

\textbf{Usage-Frequency Penalty for Sample Rotation.}
We further discourage repeated selection via a usage-frequency penalty, which promotes rotation across epochs and mitigates sample monopoly.

\paragraph{Motivation: repeated selection induces gradient bias.}
Let $N=|\mathcal{D}|$ and $\mathcal{L}(\theta)=\frac{1}{N}\sum_{i=1}^{N}\ell_i(\theta)$ be the empirical risk.
The full-dataset gradient is
\begin{equation}
    g_{\mathrm{full}}(\theta)=\nabla \mathcal{L}(\theta)=\frac{1}{N}\sum_{i=1}^{N}\nabla \ell_i(\theta).
    \label{eq:full_grad}
\end{equation}
At epoch $t$, dynamic selection trains on a subset $\mathcal{D}_t\subseteq\mathcal{D}$ of size $K=\lfloor pN\rfloor$ with gradient
\begin{equation}
    g_t(\theta)=\frac{1}{K}\sum_{i\in \mathcal{D}_t}\nabla \ell_i(\theta).
    \label{eq:subset_grad}
\end{equation}
Define the long-term inclusion frequency of sample $i$ as
\begin{equation}
    \pi_i \triangleq \lim_{T\to\infty}\frac{1}{T}\sum_{t=1}^{T}\mathbb{I}\{i\in \mathcal{D}_t\}.
    \label{eq:pi_def}
\end{equation}
Then the expected subset gradient is a reweighted full gradient:
\begin{equation}
    \mathbb{E}[g_t(\theta)] = \sum_{i=1}^{N}\frac{\pi_i}{K}\nabla \ell_i(\theta),
    \label{eq:expected_subset_grad}
\end{equation}
and the gradient bias is
\begin{equation}
\begin{aligned}
    \Delta g(\theta)&\triangleq \mathbb{E}[g_t(\theta)]-g_{\mathrm{full}}(\theta) \\
    &=
    \sum_{i=1}^{N}\left(\frac{\pi_i}{K}-\frac{1}{N}\right)\nabla \ell_i(\theta).
    \label{eq:grad_bias}
\end{aligned}
\end{equation}
When a small set of samples dominates selection, $\{\pi_i\}$ becomes highly non-uniform, yielding systematic deviation from $g_{\mathrm{full}}(\theta)$.
Thus, reducing ``sample monopoly'' and encouraging more balanced inclusion frequencies (ideally $\pi_i\approx K/N$) is desirable for mitigating bias.

\paragraph{Penalty.}
Let $u_i(t)$ denote the cumulative number of times sample $i$ has been selected up to epoch $t$.
We define a sublinear log penalty
\begin{equation}
    Pen(i,t)=\lambda \log(1+u_i(t)),
    \label{eq:penalty}
\end{equation}
which is subtracted from the sampling score so that overused samples gradually lose priority, promoting rotation across training.

\paragraph{Anti-monopoly property.}
Define the \emph{base score} (before applying the usage penalty) as
\begin{equation}
    s_i(t)\triangleq \alpha(t)\,Rep(i)+\bigl(1-\alpha(t)\bigr)\,Div(i).
    \label{eq:base_score}
\end{equation}
Assume $s_i(t)\in[s_{\min},s_{\max}]$ and let $\Delta=s_{\max}-s_{\min}$.
Define the penalized score $\tilde{s}_i(t)=s_i(t)-\lambda\log(1+u_i(t))$.

\begin{theorem}[Sample Rotation / Anti-Monopoly]
\label{thm:anti_monopoly}
For any two samples $i$ and $j$, if
\begin{equation}
    u_i(t)\ge (1+u_j(t))\exp(\Delta/\lambda)-1,
    \label{eq:anti_monopoly_cond}
\end{equation}
then $\tilde{s}_i(t)\le \tilde{s}_j(t)$; i.e., sufficiently over-sampled instances cannot dominate the ranking indefinitely.
\end{theorem}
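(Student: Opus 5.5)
The plan is a direct comparison of the two penalized scores. I will bound the base-score gap using only the range assumption, and then show that the condition \eqref{eq:anti_monopoly_cond} makes the penalty gap at least that large.

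First I write the difference as
\begin{equation*}
\tilde{s}_i(t)-\tilde{s}_j(t)=\bigl(s_i(t)-s_j(t)\bigr)-\lambda\bigl[\log(1+u_i(t))-\log(1+u_j(t))\bigr].
\end{equation*}
Since both base scores in \eqref{eq:base_score} lie in $[s_{\min},s_{\max}]$, the first bracket is at most $\Delta$. The conclusion $\tilde{s}_i(t)\le\tilde{s}_j(t)$ therefore follows once I show
\begin{equation*}
\lambda\log\frac{1+u_i(t)}{1+u_j(t)}\ge\Delta .
\end{equation*}

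Second, I rearrange the hypothesis. Condition \eqref{eq:anti_monopoly_cond} is equivalent to
\begin{equation*}
1+u_i(t)\ge(1+u_j(t))\exp(\Delta/\lambda).
\end{equation*}
Both sides are positive because $u\ge 0$, and I assume $\lambda>0$, which is implicit since $\exp(\Delta/\lambda)$ appears in the statement. I then divide by $1+u_j(t)>0$, take logarithms using the monotonicity of $\log$, and multiply by $\lambda>0$. This gives exactly the required inequality. Combining it with the first step yields
\begin{equation*}
\tilde{s}_i(t)-\tilde{s}_j(t)\le\Delta-\Delta=0 .
\end{equation*}

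There is no real obstacle. The only care needed is to state the implicit assumptions $\lambda>0$ and $u_i,u_j\ge0$, so that the logarithms and the inequality directions are valid. For the interpretive clause, I note that $u_i(t)$ increases by one each time $i$ is selected. If $i$ kept being selected while some $j$ was not, $u_j$ would stay fixed while $u_i$ grew without bound. Condition \eqref{eq:anti_monopoly_cond} would then eventually hold, and $i$ would rank no higher than $j$, so no sample can dominate the ranking indefinitely. Ties are resolved by the non-strict inequality.
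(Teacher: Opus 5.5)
Your proposal is correct and follows the paper's proof essentially line for line. You bound $s_i(t)-s_j(t)$ by $\Delta$, rewrite the hypothesis as $\log\frac{1+u_i(t)}{1+u_j(t)}\ge\Delta/\lambda$, and conclude that the penalized difference is nonpositive; stating the implicit assumptions $\lambda>0$ and $u_i(t),u_j(t)\ge 0$ is a small addition that the paper leaves unstated.
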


\begin{proof}
We have
\begin{equation}
\begin{aligned}
\tilde{s}_i(t)-\tilde{s}_j(t)
&=(s_i(t)-s_j(t))
-\lambda\log\!\frac{1+u_i(t)}{1+u_j(t)} \\
&\le
\Delta
-\lambda\log\!\frac{1+u_i(t)}{1+u_j(t)}.
\end{aligned}
\end{equation}
By the condition in~\eqref{eq:anti_monopoly_cond},
$\log\!\frac{1+u_i(t)}{1+u_j(t)} \ge \Delta/\lambda$,
which implies
$\tilde{s}_i(t)-\tilde{s}_j(t)\le 0$.\qedhere
\end{proof}

We evaluate the effect of sample rotation under noisy or hard examples in the Appendix~\ref{sec:noiseRubust}, where repeated selection can otherwise dominate the training signal.

\subsection{Curriculum Scheduling of Representativeness and Process-Level Diversity}
\label{subsec:scheduler}

We use a lightweight curriculum to trade off representativeness and process-level diversity over training.
At epoch $t$, the sampling score is
\begin{equation}
\mathcal{H}(i,t) \triangleq s_i(t) - Pen(i,t),
\label{eq:score_def}
\end{equation}
with a convex combination
\begin{equation}
s_i(t)
=
\alpha(t)\,Rep(i)
+
\bigl(1-\alpha(t)\bigr)\,Div(i).
\label{eq:base_score_sched}
\end{equation}
A larger $\alpha(t)$ early on emphasizes representative examples that cover high-frequency factors, while a smaller $\alpha(t)$ later increases rare-factor exposure over epochs through process-level diversity and rotation.
Importantly, this schedule is model-agnostic and does not rely on gradients or higher-order information. Optionally, we run a short full-data refinement stage in the last $T_{\mathrm{full}}$ epochs by setting $\mathcal{S}(t)=\mathcal{D}$, further reducing any residual bias. To mitigate class imbalance, we apply a simple class-balanced constraint by selecting an equal number of samples from each class at every epoch.

We adopt a smooth sigmoid schedule with a floor $\alpha_{\min}$:
\begin{equation}
\begin{aligned}
\alpha(t)
&=
\alpha_{\min}
+
\bigl(1-\alpha_{\min}\bigr)
\Bigl(1-\sigma\!\bigl(k(t-t_{\mathrm{mid}})\bigr)\Bigr), \\
\sigma(x)
&=
\frac{1}{1+e^{-x}}.
\end{aligned}
\label{eq:alpha}
\end{equation}
Here $t_{\mathrm{mid}}$ sets the transition point and $k$ controls the sharpness of the change. The overall procedure is summarized in Algorithm~\ref{alg:dds}.

\begin{algorithm}[t]
\caption{}
\label{alg:dds}
\begin{algorithmic}[1]
\REQUIRE Dataset $\mathcal{D}$; selection ratio $p$; epochs $T$; full-data refinement ratio $\rho$.
\REQUIRE Precomputed per-sample scores $Rep(i)$ and $Div(i)$.
\REQUIRE Scheduler $\alpha(t)$ in \cref{eq:alpha}, penalty weight $\lambda$.
\STATE Initialize $u_i(0)=0$ for all samples.

\FOR{$t=1$ to $T$}
    \IF{$t > (1-\rho)T$}
        \STATE Set $\mathcal{S}(t)=\mathcal{D}$ \hfill{\footnotesize (train on full data)}
    \ELSE
        \STATE Compute $\alpha(t)$ by \cref{eq:alpha}
        \FORALL{sample $i$}
            \STATE $Pen(i,t)=\lambda\log(1+u_i(t-1))$
            \STATE Update $\alpha(t)$
            \STATE Update $\mathcal{H}(i,t)$
        \ENDFOR
        \STATE Select $\mathcal{S}(t)$ as the top-$\lfloor p|\mathcal{D}|\rfloor$ samples by $\mathcal{H}(i,t)$
    \ENDIF

    \STATE Train the model for one epoch on $\mathcal{S}(t)$
    \STATE Update $u_i(t)=u_i(t-1)+\mathbb{I}\{i\in\mathcal{S}(t)\}$
\ENDFOR
\end{algorithmic}
\end{algorithm}

\section{Experiments}
\label{sec:exp}

\begin{table*}[!ht]
\centering
\caption{Accuracy (\%) comparison on CIFAR-10/100. Random$^*$ denotes dynamic random selection. The reported ratios denote the \textbf{selection ratios}. \textbf{Ours} is highlighted in grey. \ding{51} = Dynamic, \ding{55} = Static.}
\label{tab:cifar_30_70}
\setlength{\tabcolsep}{5pt}
\begin{tabular}{@{}c|l|c|cc|cc@{}}
\toprule
\multirow{2}{*}{\textbf{Backbone}} & \multirow{2}{*}{\textbf{Method}} & \multirow{2}{*}{\textbf{Type}} &
\multicolumn{2}{c|}{\textbf{CIFAR-10} (Selection Ratio)} & \multicolumn{2}{c}{\textbf{CIFAR-100} (Selection Ratio)} \\
& & & \textbf{70\%} & \textbf{30\%} & \textbf{70\%} & \textbf{30\%} \\
\midrule

\multirow{14}{*}{ResNet-18}
& Random & \ding{55}
& 94.6$_{\downarrow1.5}$ & 90.2$_{\downarrow5.9}$
& 73.8$_{\downarrow4.9}$ & 69.7$_{\downarrow9.0}$ \\
& GraNd-4 \cite{paul2023deeplearningdatadiet} & \ding{55}
& 95.3$_{\downarrow0.8}$ & 91.2$_{\downarrow4.9}$
& 74.6$_{\downarrow3.6}$ & 68.8$_{\downarrow9.4}$ \\
& MoDS \cite{xia2023moderate} & \ding{55}
& 93.9$_{\downarrow2.2}$ & 90.6$_{\downarrow5.5}$
& 74.6$_{\downarrow3.6}$ & 65.3$_{\downarrow12.9}$ \\
& MoSo \cite{Tan2023DataPV} & \ding{55}
& 95.3$_{\downarrow0.8}$ & 91.1$_{\downarrow6.0}$
& 77.5$_{\downarrow0.7}$ & 70.9$_{\downarrow7.3}$ \\
& $\mathbb{D}^2$ \cite{maharana2024mathbbd} & \ding{55}
& 95.7$_{\downarrow0.4}$ & 93.3$_{\downarrow2.8}$
& 78.2$_{\uparrow0.0}$ & 70.5$_{\downarrow7.7}$ \\
& DP \cite{yang2023dataset} & \ding{55}
& 94.9$_{\downarrow1.2}$ & 90.8$_{\downarrow5.3}$
& 77.2$_{\downarrow1.0}$ & -- \\
& Random$^*$ & \ding{51}
& 94.8$_{\downarrow1.3}$ & 93.0$_{\downarrow2.6}$
& 77.3$_{\downarrow0.9}$ & -- \\
& $\epsilon$-greedy \cite{raju2021acceleratingdeeplearningdynamic} & \ding{51}
& 95.2$_{\downarrow0.9}$ & 94.1$_{\downarrow2.0}$
& 76.4$_{\downarrow1.8}$ & -- \\
& UCB \cite{raju2021acceleratingdeeplearningdynamic} & \ding{51}
& 95.3$_{\downarrow0.8}$ & 93.9$_{\downarrow2.2}$
& 77.3$_{\downarrow0.9}$ & -- \\
& InfoBatch \cite{qin2024infobatch} & \ding{51}
& 95.6$_{\downarrow0.5}$ & 94.7$_{\downarrow1.4}$
& 78.2$_{\uparrow0.0}$ & 76.5$_{\downarrow1.7}$ \\
& RS2 w/r \cite{ICLR2024_68b8d2bc} & \ding{51}
& 95.3$_{\downarrow0.8}$ & 94.2$_{\downarrow1.9}$ 
& 77.1$_{\downarrow1.1}$ & 74.9$_{\downarrow3.3}$  \\
& RS2 w/o \cite{ICLR2024_68b8d2bc} & \ding{51}
& 94.9$_{\downarrow1.2}$ & 94.1$_{\downarrow2.0}$ 
& 76.9$_{\downarrow1.3}$ & 75.8$_{\downarrow2.4}$ \\
& RCAP \cite{hassan2025rcap}& \ding{51}
& 95.9$_{\downarrow0.2}$ & 94.4$_{\downarrow1.7}$ 
& 77.6$_{\downarrow0.7}$ & 75.6$_{\downarrow2.6}$ \\
& \cellcolor{gray!15}\textbf{Ours} & \cellcolor{gray!15}\ding{51}
& \cellcolor{gray!15}\textbf{96.1}$_{\uparrow0.0}$ & \cellcolor{gray!15}\textbf{95.3}$_{\downarrow0.8}$
& \cellcolor{gray!15}\textbf{78.4}$_{\uparrow0.2}$ & \cellcolor{gray!15}\textbf{76.7}$_{\downarrow1.5}$ \\
\cmidrule(lr){2-7}
& \textbf{Whole Dataset} & --
& \multicolumn{2}{c|}{96.1$_{\pm0.1}$}
& \multicolumn{2}{c}{78.2$_{\pm0.1}$} \\
\midrule

\multirow{9}{*}{ResNet-50}
& Random & \ding{55}
& 95.0$_{\downarrow0.8}$ & 90.3$_{\downarrow5.5}$
& -- & 53.6$_{\downarrow25.5}$ \\
& MoDS \cite{xia2023moderate} & \ding{55}
& 95.3$_{\downarrow0.5}$ & 91.1$_{\downarrow4.7}$
& -- & 57.8$_{\downarrow21.3}$ \\
& Self-sup. \cite{NEURIPS2022_7b75da9b} & \ding{55}
& 95.0$_{\downarrow0.8}$ & 90.1$_{\downarrow5.7}$
& -- & 54.6$_{\downarrow24.5}$ \\
& CLIP \cite{yang2024clip} & \ding{55}
& 95.4$_{\downarrow0.4}$ & 91.1$_{\downarrow4.7}$
& -- & 62.3$_{\downarrow12.8}$ \\
& RS2 w/r \cite{ICLR2024_68b8d2bc} & \ding{51}
& 95.4$_{\downarrow0.4}$ & 94.0$_{\downarrow1.8}$ 
& 78.4$_{\downarrow0.7}$ & 75.1$_{\downarrow3.3}$  \\
& RS2 w/o \cite{ICLR2024_68b8d2bc} & \ding{51}
& 95.3$_{\downarrow0.5}$ & 93.1$_{\downarrow2.7}$ 
& 75.3$_{\downarrow3.8}$ & 72.7$_{\downarrow6.4}$ \\
& RCAP \cite{hassan2025rcap}& \ding{51}
& 95.2$_{\downarrow0.6}$ & 94.4$_{\downarrow1.4}$ 
& 78.0$_{\downarrow1.1}$ & 74.2$_{\downarrow4.9}$ \\
& \cellcolor{gray!15}\textbf{Ours} & \cellcolor{gray!15}\ding{51}
& \cellcolor{gray!15}\textbf{95.6}$_{\downarrow0.2}$ & \cellcolor{gray!15}\textbf{94.5}$_{\downarrow1.3}$
& \cellcolor{gray!15}\textbf{79.1}$_{\uparrow0.0}$ & \cellcolor{gray!15}\textbf{75.7}$_{\downarrow3.4}$ \\
\cmidrule(lr){2-7}
& \textbf{Whole Dataset} & --
& \multicolumn{2}{c|}{95.8$_{\pm0.1}$}
& \multicolumn{2}{c}{79.1$_{\pm0.1}$} \\
\bottomrule
\end{tabular}
\end{table*}

\subsection{Experiment Setup}

\noindent\textbf{Datasets and backbones.}
We evaluate our dynamic data selection framework on four image-classification benchmarks—CIFAR-10/100~\cite{Krizhevsky2009LearningML}, Tiny-ImageNet~\cite{chrabaszcz2017downsampledvariantimagenetalternative}, and ImageNet-1K—and one text-classification benchmark, RSD\_15K~\cite{Zheng2025RSD15KAL}.
We consider diverse architectures, including ResNet-18/34/50~\cite{he2015deepresiduallearningimage}, VGG~\cite{Simonyan2014VeryDC}, and ViT~\cite{dosovitskiy2021an} for vision, and RoBERTa \cite{Liu2019RoBERTaAR} for RSD\_15K \cite{Zheng2025RSD15KAL}.
CIFAR-10/100 results are summarized in \cref{tab:cifar_30_70}, while ImageNet-1K and RSD\_15K results are reported in \cref{tab:imagenet_tiny_singlecol}.
For Tiny-ImageNet, we provide the complete training details and results in Appendix~\ref{app:tiny} for reproducibility.

\noindent\textbf{Dynamic selection protocol.}
Unless stated otherwise, the reported ratio denotes the \textbf{per-epoch selection ratio}: at the beginning of each epoch, we re-sample a subset and train on the selected examples only.
For all comparisons, we keep the training schedule and optimization hyperparameters identical across methods and only change the data stream induced by each selection strategy.
Unless noted, we further use full-data training for the last 15\% epochs as a short refinement stage.
Additional implementation details are provided in Appendix~\ref{sec:implement}.

\subsection{Comparison with the State-of-the-Arts}

\noindent\textbf{Setup.}
We compare against representative \emph{static} and \emph{dynamic} data selection baselines on CIFAR-10/100 under two per-epoch \textbf{selection ratios} (30\% and 70\%). We include commonly used static baselines based on proxy or feature-space criteria (GraNd-4~\cite{paul2023deeplearningdatadiet}, MoDS~\cite{xia2023moderate}, MoSo~\cite{Tan2023DataPV}, $\mathbb{D}^2$~\cite{maharana2024mathbbd}, DP~\cite{yang2023dataset}) and dynamic baselines that update the data stream across epochs ($\epsilon$-greedy/UCB~\cite{raju2021acceleratingdeeplearningdynamic}, InfoBatch~\cite{qin2024infobatch}, RS2~\cite{ICLR2024_68b8d2bc}, RCAP~\cite{hassan2025rcap}).
For each method, we follow a consistent training protocol and compare under the settings that are supported by our implementation.

\noindent\textbf{Results.}
As shown in Table~\ref{tab:cifar_30_70}, our method remains competitive across datasets and selection ratios, and the results reveal clear patterns.
First, dynamic selection is inherently more robust at low selection ratios: even \emph{dynamic random} surpasses several static baselines at 30\%, underscoring that epoch-wise re-sampling alone already alleviates the brittleness of committing to a single fixed subset.
Second, beyond this “dynamic” advantage, our method delivers additional and consistent improvements over strong dynamic baselines at both 30\% and 70\%, with the largest gains on the more challenging CIFAR-100 benchmark. This is consistent with our design: early representativeness helps maintain a faithful proxy of the full data, while later rotation and rare-factor coverage prevent sample monopoly and broaden training signals over time.
Notably, among all compared methods, our approach most consistently matches full-data training performance across both ResNet-18 and ResNet-50 on CIFAR-10/100, indicating that aggressive data reduction need not sacrifice accuracy when selection is properly structured.

Results on ImageNet-1K and RSD\_15K in Table~\ref{tab:imagenet_tiny_singlecol} further support the generality of our scoring-and-rotation design beyond CIFAR.
On ImageNet-1K, we evaluate with ResNet-34 and control for compute by keeping the total number of forward passes identical to full-data training (increasing epochs proportionally as the selection ratio decreases), which isolates the effect of curriculum-based sampling from simple under-training.
On RSD\_15K, we additionally report a static variant that selects instances with high representativeness and diversity scores, showing that the two scores remain informative even without online rotation, while our full method further benefits from process-level rotation when dynamic selection is enabled.

\subsection{Efficiency Comparison.}

\textbf{Convergence speed.}
On CIFAR-10 with ResNet-18, we report convergence curves in \cref{fig:conv} and measure the steps to reach \textbf{90\%} validation accuracy (\emph{step@90}). In the early stage, our curve closely matches full-data training, supporting that the representativeness-dominant selection provides a faithful proxy of the full set. In the mid-to-late stage, our method improves faster and overtakes full-data training, suggesting that sample rotation and rare-factor coverage accelerate learning by exposing complementary signals over epochs. Overall, our method reaches the target accuracy in the fewest steps.

\begin{table}[t]
\centering
\caption{\textbf{Top-1 accuracy (\%)} on ImageNet-1K under different \textbf{selection ratios}.
Random$^{*}$ denotes dynamic random selection per epoch.}
\label{tab:imagenet_tiny_singlecol}
\setlength{\tabcolsep}{4pt}
\renewcommand{\arraystretch}{1.08}
\scriptsize
\resizebox{\columnwidth}{!}{
\begin{tabular}{l|l|cccc}
\toprule
\textbf{Dataset} & \textbf{Method} & \textbf{100\%} & \textbf{70\%} & \textbf{50\%} & \textbf{30\%} \\
\midrule

\multirow{7}{*}{\shortstack{\textbf{ImageNet-1K}\\{(ResNet-34)}}}
& Random      & 73.1 & 72.2 & 70.3 & 66.7 \\
& Entropy     & 73.1 &  72.3 & 70.8 & 64.1 \\
& $\mathbb{D}^2$     & 73.1 & 73.0 & 71.4 &  64.8 \\
& Moderate-DS & 73.1 & 72.0 & 70.3 & 65.9 \\
& K-center        & 73.1 & 72.2 & 67.2 & 48.8 \\
& Forgetting  & 73.1 & 72.6 & 70.9 & 66.5 \\
& \cellcolor{gray!10}\textbf{Ours}
& \cellcolor{gray!10}73.1
& \cellcolor{gray!10}\textbf{73.4}
& \cellcolor{gray!10}\textbf{73.7}
& \cellcolor{gray!10}\textbf{73.8} \\
\midrule
\multirow{2}{*}{\shortstack{\textbf{RSD\_15K}\\{(RoBERTa)}}}
& K-Center  & 71.0 & 71.6 & 69.7 & 69.1 \\
& \cellcolor{gray!10}\textbf{Ours}
& \cellcolor{gray!10}71.0
& \cellcolor{gray!10}\textbf{72.5}
& \cellcolor{gray!10}\textbf{71.5}
& \cellcolor{gray!10}\textbf{69.5} \\
\bottomrule
\end{tabular}
}
\end{table}

\begin{table}[]
    \centering
        \caption{The accuracy(\%) of different model architectures on CIFAR10/100 datasets. The model was trained using our method with 70\% samples selected per epoch. Our proposed method outperforms the full dataset on various occasions.}
    \begin{tabular}{cc|cc|ccc}
    \toprule
         & \textbf{Dataset} & \multicolumn{2}{c|}{CIFAR10} & \multicolumn{3}{c}{CIFAR100} \\
         \toprule
         & Model & R-18 & R-50 & R-18 & R-50 & VGG-16\\
         \toprule
         & Full & 96.1 & \textbf{95.8} & 78.2 & 79.1 & 73.9\\
         \toprule
         & Ours & \textbf{96.1} & 95.6 & \textbf{78.4} & \textbf{79.1} & \textbf{74.0}\\
         \toprule
    \end{tabular}
    \label{tab:overall_performance}
\end{table}

\textbf{Additional overhead.}
Our pipeline separates \emph{offline} scoring from \emph{online} updates.
After training the SAE once, we precompute Rep/Div scores for all samples; during training, we only update the usage-frequency penalty and the scheduler weight per epoch.
The SAE cost is thus a one-time, reusable component, and it transfers across datasets: as shown in \cref{tab:SAE_gene}, an SAE trained on ImageNet-1K yields comparable performance when used to score CIFAR-10.
Consequently, in the reusable-SAE setting, the amortized overhead is dominated by lightweight online updates.
\cref{tab:efficiency} summarizes the resulting speed--accuracy trade-off.

\begin{table}[t]
\centering
\caption{Top-1 accuracy (\%) on CIFAR-10 using ResNet-18. Models are trained with varying pruning ratios per epoch. I-1K $\to$ C-10 denotes ImageNet-1K trained SAE on CIFAR-10 scoring, then train ResNet-18 by our framework; C-10 $\to$ C-10 denotes our proposed method.}
\label{tab:SAE_gene}
\begin{tabular}{lccccc}
\toprule
\multirow{2}{*}{\textbf{Training Protocol}} & \multicolumn{5}{c}{\textbf{Selection Ratio per Epoch (\%)}} \\
\cmidrule(lr){2-6}
& 70 & 50 & 30 & Full \\
\midrule
I-1K $\to$ C-10 & 96.1 & 95.6 & \textbf{95.7} & 96.1 \\
C-10 $\to$ C-10 & 96.1 & 95.6 & 95.3 & 96.1 \\
\bottomrule
\end{tabular}
\end{table}

\begin{figure}[h]
    \centering
    \includegraphics[width=1\linewidth]{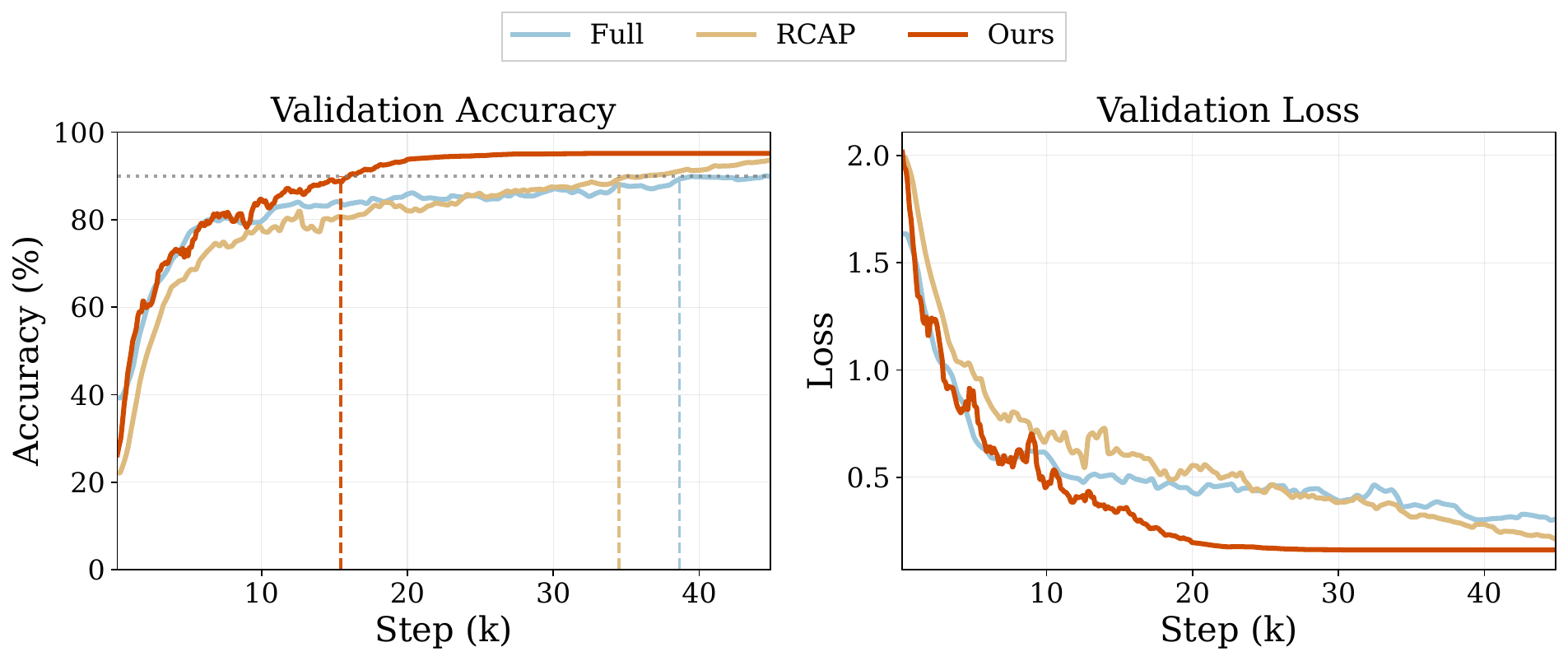}
    \caption{Convergence speed comparison on CIFAR-10 (ResNet-18). \textbf{Ours} denotes our method, \textbf{RCAP} denotes the previous SOTA method, and \textbf{Full} uses the full dataset.}
    \label{fig:conv}
\end{figure}

\subsection{Ablation Study}

We ablate each design choice on CIFAR-100 with ResNet-50 (200 epochs).
\cref{tab:ablation} reports accuracy at 50 epochs (optimization progress) and the final accuracy.

\textbf{Common-factor coverage supports early progress.}
Removing $Rep$ (\emph{w/o Rep}) consistently lowers both the 50-epoch and final accuracy, indicating that prioritizing coverage of common factors improves the early optimization trajectory under selection.

\textbf{Rare-factor exposure over time matters for the end performance.}
Removing $Div$ (\emph{w/o Div}) yields a relatively small gap at 50 epochs but a larger gap at the end, suggesting that process-level exposure to complementary rare factors is important for the later-stage performance.

\textbf{Rotation via usage-frequency penalty is critical.}
The usage-frequency penalty is the key mechanism that enforces sample rotation.
Disabling it (\emph{w/o Pen}) leads to a marked degradation, consistent with the observation that greedy re-selection can concentrate training on a small subset across epochs.
Interestingly, the penalty alone (\emph{only $Pen$}) already provides a strong baseline: by explicitly discouraging repeated inclusion, it spreads selection over the dataset and partially mitigates long-term sampling imbalance even without $Rep$ or $Div$.

\textbf{Scheduling improves the trade-off beyond a fixed mixture.}
Without the curriculum schedule (\emph{w/o $\alpha$}), performance drops compared to the full method.
A static mixing variant (\emph{Static})---fixed $Rep/Div$ combination and no penalty---performs substantially worse, showing the benefit of coupling time-varying emphasis with rotation.

\textbf{Full-data refinement stabilizes the final model.}
Removing the final full-data stage (\emph{w/o Full-data}) decreases the final accuracy, suggesting that a short refinement phase on the full dataset helps reduce residual selection-induced bias and consolidates the learned representation.
 
\begin{table}[t]
\centering
\caption{Training efficiency and accuracy comparison. Overhead includes CLIP feature extraction, SAE training, and samples scoring. Speedup = Full / Ours training time.}
\label{tab:efficiency}
\setlength{\tabcolsep}{6pt}
\begin{tabular}{@{}l c c c c@{}}
\toprule
\multirow{2}{*}{\textbf{Dataset}} &  \multirow{2}{*}{\textbf{Method}} & \multirow{2}{*}{\textbf{Acc (\%)}}  & \textbf{Overhead} & \multirow{2}{*}{\textbf{Speedup}} \\
& & & \textbf{(GPU h)}  & \\
\midrule
\multirow{2}{*}{C-10} & Full & 96.1\textsubscript{±0.1}  & – & 1.0× \\
         & \textbf{Ours} & \textbf{95.7}\textsubscript{±0.1} & \textbf{0.0028} & \textbf{2.51×} \\
\midrule
\multirow{2}{*}{T-Img} & Full & 58.2\textsubscript{±0.1}  & – & 1.0× \\
            & \textbf{Ours} & 58.1\textsubscript{±0.2} & \textbf{0.12} & \textbf{1.34×} \\
\bottomrule
\end{tabular}
\end{table}

\begin{table}[t]
\centering
\caption{\textbf{Ablation on CIFAR-100 (ResNet-50, 200 epochs).}
$R$, $D$, and $P$ denote the representativeness score, process-level diversity score, and usage-frequency penalty, respectively; $\alpha$ is the curriculum scheduler.
\emph{w/o Full-data} removes the final full-data refinement stage.
\emph{only $Pen$} selects samples by rotation only, i.e., prioritizing those with lower historical usage.}
\label{tab:ablation}
\setlength{\tabcolsep}{6pt}
\renewcommand{\arraystretch}{1.05}
\begin{tabular}{lcccccc}
\toprule
Variant & $R$ & $D$ & $P$ & $ \alpha $ & 50 epochs & Final \\
\midrule
Full Method & \checkmark & \checkmark & \checkmark & \checkmark & \textbf{57.7} & \textbf{79.1} \\
\midrule
w/o $Rep$ & \ding{55} & \checkmark & \checkmark & \checkmark & 57.1 & 78.1 \\
w/o $Div$ & \checkmark & \ding{55} & \checkmark & \checkmark & 57.5 & 77.2 \\
w/o $Pen$ & \checkmark & \checkmark & \ding{55} & \checkmark & 54.7 & 76.4 \\
w/o $\alpha$ & \checkmark & \checkmark & \checkmark & \ding{55} & 54.5 & 78.2 \\
Static & \checkmark & \checkmark & \ding{55} & \ding{55} & 53.8 & 76.3 \\
only $Div$ & \ding{55} & \checkmark & \ding{55} & \ding{55} & 57.1 & 77.1 \\
only $Rep$ & \checkmark & \ding{55} & \ding{55} & \ding{55} & 54.1 & 76.8 \\
only $Pen$ & \ding{55} & \ding{55} & \checkmark & \ding{55} & 54.6 & 78.1 \\
w/o Full-data & \checkmark & \checkmark & \checkmark & \checkmark & 57.6 & 78.3 \\
\bottomrule
\end{tabular}
\end{table}

\subsection{Transferability Across Architectures and Modalities}
\label{subsec:transfer}

A key property of our framework is that the scoring module is \emph{model-agnostic} to the downstream learner.
Representativeness and diversity are computed once in a plug-in feature space with an SAE-based sparse-unit probe, and thus do not rely on architecture-specific training signals such as losses or gradients.
During training, only the usage-frequency penalty and the scheduler weight are updated online.

\textbf{Cross-backbone transfer.}
We evaluate the same fixed scoring module across diverse backbones, including CNNs and vision transformer.
As shown in \cref{tab:overall_performance,fig:vgg16}, our method consistently yields competitive accuracy--efficiency trade-offs under the same selection ratio, closely matching full-data training across architectures.
This suggests that emphasizing \emph{dataset-level coverage of common factors} together with process-level rotation provides stable selection signals that transfer across model families.

\textbf{Cross-modality transfer.}
Beyond vision classification, we additionally report results on a text classification benchmark (RSD\_15K in \cref{tab:imagenet_tiny_singlecol}).
The same scoring-and-rotation principle remains effective when an appropriate feature extractor is available, indicating that the framework extends beyond a specific vision setting.

\begin{figure}
    \centering
    \includegraphics[width=1\linewidth]{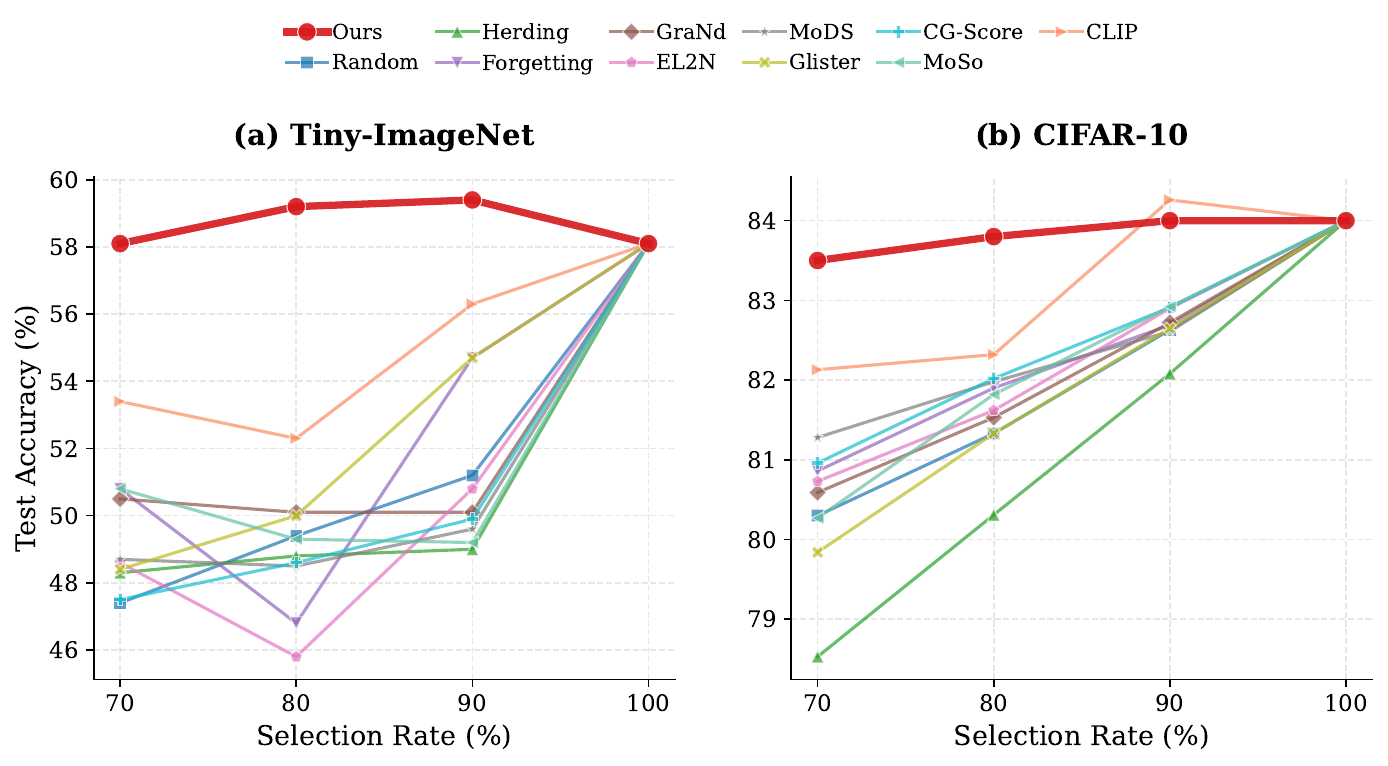}
    \caption{ Illustrations of comparing our method with various data selection baselines on different datasets and models. We trained VGG-16 on Tiny-ImageNet in Fig (a), ViT-small on CIFAR-10 in Fig (b).}
    \label{fig:vgg16}
\end{figure}

\textbf{Robustness to learning-rate schedules.}
We evaluate our method under three common schedulers while keeping other training settings fixed.
Across all schedules, our method preserves accuracy and delivers consistent training-time reductions; see Appendix~\ref{app:scheduler} for full settings and results.

\section{Conclusion}
\label{sec:conclusion}

This work rethinks dynamic data selection through two complementary lenses: representativeness as dataset-level coverage of high-frequency factors, and diversity as a process-level requirement enforced over epochs.
Using a fixed sparse-unit probe in a plug-in feature space, we precompute offline scores for representativeness and diversity, and update selection online only through a usage-frequency penalty and a lightweight scheduler, which together encourage sample rotation and reduce long-term selection bias.
Across multiple vision and text benchmarks and a range of backbones, the resulting framework yields a stronger speed--accuracy trade-off.
We hope our protocol and analyses will facilitate further study of scalable dynamic selection, and we plan to explore broader tasks and larger-scale settings and explore the synergistic effects with methods such as data synthesis and data augmentation in future work.

\section*{Impact Statement}
This paper presents work whose goal is to advance the field of machine learning. There are many potential societal consequences of our work, none of which we feel must be specifically highlighted here.
\bibliography{main}
\bibliographystyle{icml2026}
% WARNING: do not forget to delete the supplementary pages from your submission 
\newpage
\appendix
\onecolumn

\section{Training and Analysis of Sparse Autoencoder}

\subsection{Motivation}
Modern vision-language models (e.g., CLIP) provide strong but dense embeddings, where multiple factors are mixed across dimensions. 
For our selection scores, we therefore introduce a sparse factorization that turns dense features into sparse unit activations, enabling simple dataset-level statistics over \emph{common} versus \emph{rare} factors.

Specifically, we train a Top-$k$ Sparse Autoencoder (SAE) on the chosen embedding space (CLIP by default). 
The SAE produces an overcomplete and sparse latent code, where each sample activates only a small set of units. 
These sparse activations serve as our sparse-unit probe, from which we compute weighted coverage of common factors for representative and rarity-based signals for process-level diversity.

\subsection{Model Architecture}
Let $x \in \mathbb{R}^D$ be the CLIP-ViT/L-14 embedding of an image. We define a two-layer SAE with tied weights and Top-K sparsity:
\begin{equation}
\mathbf{z}_{\text{pre}} = \mathbf{W}_e \mathbf{x} + \mathbf{b}_e, \quad \mathbf{W}e \in \mathbb{R}^{N_{\text{lat}} \times D}, 
\end{equation}
\begin{equation}
\mathbf{z} = \text{TopK}(\mathbf{z}_{\text{pre}}, k), \
\hat{\mathbf{x}} = \mathbf{W}_d \mathbf{z} + \mathbf{b}_d, \quad 
\end{equation}
where $N_\text{lat} \gg D$ denotes latent dimension, $\mathbf{W}_d = \mathbf{W}_e^\top$, $k$ denotes sparsity budget, $TopK(z,k)$ retains only the $k$ largest entries and zeros out the rest:
\begin{equation}
    \mathrm{TopK}(\mathbf{z}, k)_i =
    \begin{cases}
        z_i, & \text{if } i \in \mathcal{I}_{\text{top-}k}(\mathbf{z}), \\
        0,   & \text{otherwise},
    \end{cases}
    \label{eq:topk}
\end{equation}
with $\mathcal{I}_{\text{top-}k}(\mathbf{z})$ being the index set of the $k$ largest entries in $\mathbf{z}$ by absolute value.
\subsection{Training Details}
The SAE is trained by minimizing the reconstruction loss, augmented with a \textit{dead neuron revival} term to ensure full utilization of the latent space. Let $\mathbf{z} = \mathrm{TopK}(\mathbf{W}_e\mathbf{x} + \mathbf{b}_e, k)$ denote the sparse activation. A neuron $j$ is considered \emph{dead} if its activation count remains zero over a monitoring window. Every $T_{\text{check}}$ batches, we identify the set of dead neurons $\mathcal{D} = \{ j \mid \sum_{i=1}^{B} \mathbb{I}(z_{i,j} > 0) = 0 \}$.

For each $j \in \mathcal{D}$, we construct a \emph{revival sample} $\mathbf{x}^{\text{rev}}_j$ as the training instance that maximally excites neuron $j$ under its current pre-activation:
\begin{equation}
    \mathbf{x}^{\text{rev}}_j = \arg\max_{\mathbf{x}_i \in \mathcal{B}} \left[ (\mathbf{W}_e \mathbf{x}_i + \mathbf{b}_e)_j \right],
    \label{eq:revival_sample}
\end{equation}
where $\mathcal{B}$ is the current mini-batch.

We then reconstruct $\mathbf{x}^{\text{rev}}_j$ using \emph{only} neuron $j$, i.e., define a fake activation vector $\tilde{\mathbf{z}}^{(j)}$ as:
\begin{equation}
    \tilde{\mathbf{z}}^{(j)}_l = 
    \begin{cases}
        (\mathbf{W}_e \mathbf{x}^{\text{rev}}_j + \mathbf{b}_e)_j, & l = j, \\
        0, & l \neq j,
    \end{cases}
    \label{eq:fake_sparse}
\end{equation}

and compute its reconstruction $\hat{\mathbf{x}}^{\text{from dead}}_j = \mathbf{W}_d \tilde{\mathbf{z}}^{(j)} + \mathbf{b}_d$.

The revival loss is then summed over all dead neurons in $\mathcal{D}$:
\begin{equation}
    \mathcal{L}_{\text{revive}} = \sum_{j \in \mathcal{D}} \left\| \mathbf{x}^{\text{rev}}_j - \hat{\mathbf{x}}^{\text{from dead}}_j \right\|_2^2,
    \label{eq:revive_loss}
\end{equation}
and the total training objective is:
\begin{equation}
    \mathcal{L} = \underbrace{\|\mathbf{x} - \hat{\mathbf{x}}\|_2^2}_{\text{Reconstruction Loss}}
    \;+\;
    \lambda_{\text{revive}} \cdot \mathcal{L}_{\text{revive}}.
    \label{eq:total_loss}
\end{equation}

This procedure encourages dormant neurons to specialize on under-represented patterns, significantly improving latent coverage and downstream curriculum quality.
\subsection{Visualization of our trained SAE}
To validate the semantic disentanglement of our SAE, we perform a multi-level visualization of the latent space.

\begin{figure}
    \centering
    \includegraphics[width=1\linewidth]{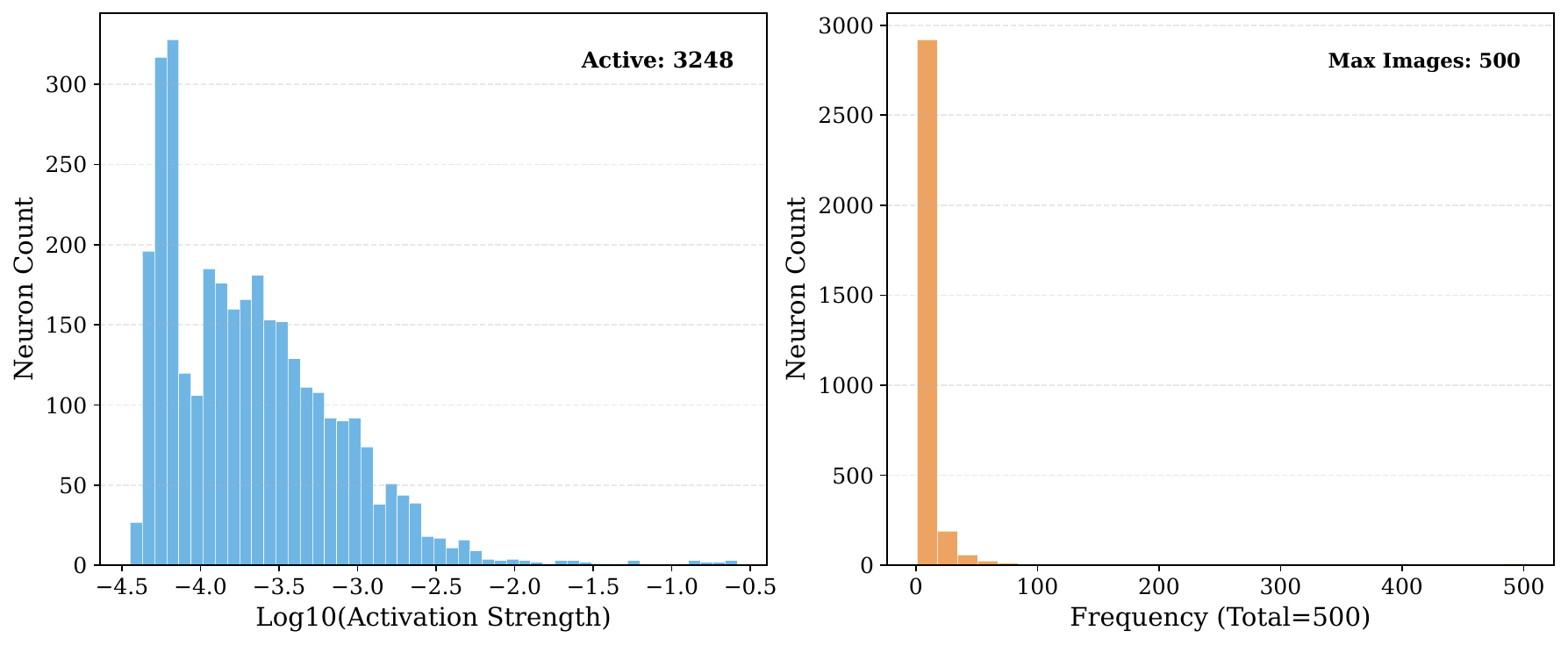}
    \caption{SAE analysis of the class 'cat'. The left panel shows the activation intensity of each neuron; selecting nearly the top 10\% of neurons as high-frequency neurons covers all neurons with high activation intensity. The right panel displays the number of samples that activate each neuron, where a maximum value of 500 indicates that all images in this class activate at least one neuron.}
    \label{fig:cat}
\end{figure}
\label{sec:rationale}

First, we project the entire latent representation using UMAP onto a 2D plane in \cref{fig:umap1}. The resulting plot reveals that samples from the same class form compact, well-separated clusters, while semantically related classes (e.g., car and bus) lie in proximity, and dissimilar classes (e.g., tulip and bicycle) are distant, as shown in \cref{fig:umap2}. This geometric structure confirms that the SAE has learned a hierarchically organized semantic space, where Euclidean distance aligns with human-perceived semantic similarity — a prerequisite for meaningful sample scoring.

Second, to examine the internal structure of a single class, we analyze the activation patterns for the 'cat' category in \cref{fig:cat}. We find that 5768 neurons are activated by at least one cat image, and the distribution of average activation strength is long-tailed, indicating that only a small subset of neurons are strongly activated — these likely correspond to core, discriminative features (e.g., “pointy ears”, “whiskers”). Moreover, the activation frequency is highly skewed: a few neurons are activated by all 5000 cat images, while most are active in only a few samples — suggesting the presence of both universal and rare, fine-grained visual concepts within the class.

\begin{figure}
    \centering
    \includegraphics[width=0.5\linewidth]{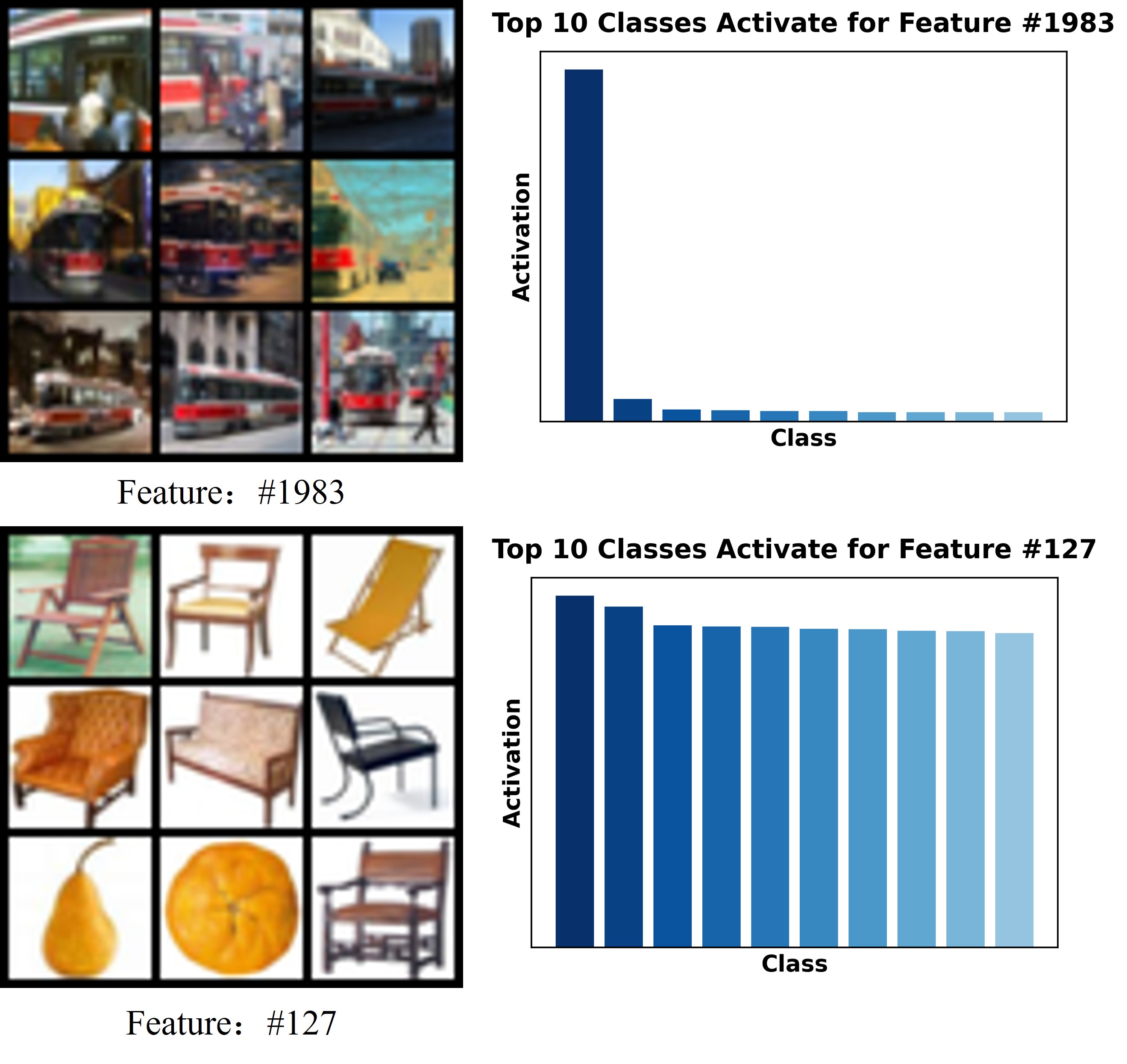}
    \caption{Visualization of two types of highly activated neurons and the nine images (on the left column) that most strongly activate them. The top row shows a class-specific neuron, which is strongly activated only in the class of street car. The bottom row shows a universal neuron, which is strongly activated across multiple classes and images with a plain     
    background.}
    \label{fig:neuron}
\end{figure}

% \begin{wrapfigure}{r}{0.42\linewidth}
%   \vspace{-6pt}
%   \centering
%   \includegraphics[width=\linewidth]{fig/neurons.jpg}
%   \caption{Visualization of two types of highly activated neurons and the nine images (on the left column) that most strongly activate them. The top row shows a class-specific neuron, which is strongly activated only in the class of street car. The bottom row shows a universal neuron, which is strongly activated across multiple classes and images with a plain     
%     background.}
%   \label{{fig:neuron}
%   \vspace{-10pt}
% \end{wrapfigure}

As shown in \cref{fig:neuron}, high frequency neurons can be divided into two categories: class-specific and universal. In our methods, we take the universal ones into consideration, which are considered to represent the implicit feature of the full dataset.

\section{Method Details}

\subsection{Robustness to Feature-Space Instantiation}
\label{sec:feature}
By default, we train the sparse-unit probe on CLIP embeddings. We additionally instantiate the probe on downstream ResNet-18 representations while keeping the rest of the selection pipeline unchanged; here the ResNet-18 encoder is pretrained on CIFAR-10 to provide a reasonable feature space.
As shown in Table~\ref{tab:instantiation_resnet18}, CLIP features perform best, while the downstream-feature probe is consistently lower by \textbf{0.2} accuracy points at 30\%, 50\%, and 70\% selection ratios, indicating the method is not tied to a specific feature space.

\begin{table}[t]
\centering
\caption{Effect of sparse-unit probe instantiation on ResNet-18.
We compare probes trained on CLIP features and downstream ResNet-18 representations under different selection ratios.}
\label{tab:instantiation_resnet18}
\setlength{\tabcolsep}{10pt}
\begin{tabular}{lccc}
\toprule
\textbf{Probe Feature Space} & \textbf{30\%} & \textbf{50\%} & \textbf{70\%} \\
\midrule
CLIP embeddings        & 96.1 & 95.6 & 95.3 \\
ResNet-18 embeddings   & 95.9 & 95.4 & 95.1 \\
\bottomrule
\end{tabular}
\end{table}

\subsection{Representativeness as Distributional Faithfulness}
\label{subsec:mmd_appendix}

We provide an auxiliary evaluation to quantify whether our representativeness scoring induces subsets whose feature distribution better matches the full dataset.
Specifically, we measure \emph{distributional faithfulness} using Maximum Mean Discrepancy (MMD) computed in a fixed embedding space.

\paragraph{Embedding space.}
Unless otherwise stated, we compute features using CLIP ViT-L/14 image embeddings.
All embeddings are $\ell_2$-normalized before computing kernels.

\paragraph{Subset construction.}
For a given sampling ratio $p$, we compare two subset-selection strategies:
(i) \textbf{Ours (score-based)}: we rank examples by the precomputed representativeness score and select the top fraction \emph{within each class} (class-wise top-$p$), so that each class contributes approximately the same proportion of examples; and
(ii) \textbf{K-Center (geometric baseline)}: we run greedy $k$-center selection on the same embedding space and take the first $\lfloor p|\mathcal{D}|\rfloor$ selected samples.
The latter is purely geometry-based and does not enforce class balance.

\paragraph{MMD estimator.}
Let $\mathcal{V}=\{\phi(x_i)\}_{i=1}^{N}$ denote the full set of features and $\mathcal{U}=\{\phi(x)\mid x\in\mathcal{S}\}$ the selected subset features.
We use the squared MMD with an RBF kernel $k(\cdot,\cdot)$:
\begin{equation}
\mathrm{MMD}^{2}(\mathcal{U},\mathcal{V})
=
\mathbb{E}_{u,u'\sim \mathcal{U}}[k(u,u')]
+
\mathbb{E}_{v,v'\sim \mathcal{V}}[k(v,v')]
-
2\,\mathbb{E}_{u\sim \mathcal{U},\,v\sim \mathcal{V}}[k(u,v)].
\label{eq:mmd_mean}
\end{equation}
In practice, to control computational cost, we randomly subsample up to 5{,}000 examples from $\mathcal{U}$ and $\mathcal{V}$ to estimate these expectations.

\paragraph{Kernel setting.}
We follow the default setting in \texttt{sklearn.metrics.pairwise.rbf\_kernel}, i.e., $\gamma = 1/d$ where $d$ is the feature dimension, which corresponds to a fixed bandwidth in the chosen embedding space.

\paragraph{Protocol.}
We evaluate multiple sampling ratios $p\in\{0.01,0.05,0.1,0.3,0.5,0.7\}$.
For each $p$, we compute MMD between the selected subset and the full dataset.
Lower MMD indicates that the selected subset better preserves the full-data feature distribution in the chosen embedding space.
We visualize the trend in \cref{fig:mmd} and provide t-SNE overlays as qualitative complements.

\paragraph{Remark.}
This evaluation is not used for training and serves only as an analysis tool to assess distributional coverage induced by different selection strategies.

\subsection{Noise Robustness}
\label{sec:noiseRubust}
To evaluate robustness to imperfect supervision, we construct a noisy training set for CIFAR-100 by injecting \textbf{20\% symmetric label noise} into the training split while keeping the validation/test split unchanged (\cref{tab:noise_robustness_cifar100_r50_min}).
Concretely, we randomly sample 20\% training instances and replace their labels with a uniformly sampled class index in $\{0,\dots,99\}$, and then save the corrupted set in the original CIFAR binary format for a drop-in replacement during training.

We additionally visualize the per-sample usage statistics under label noise to reveal the dynamics of different selection strategies (\cref{fig:sample}).
For loss-driven methods, the usage distribution becomes highly concentrated, indicating that noisy samples with persistently high loss are repeatedly selected.
Such behavior amplifies sampling variance and introduces long-term bias in the effective training distribution.

Our method yields a markedly flatter usage curve, indicating more even sample participation over training (\cref{fig:sample}).
This stability arises from relying on semantics-aware scores computed offline and an explicit usage-frequency penalty, rather than noisy, step-wise loss feedback.
As a result, our approach mitigates selection-induced bias and maintains robust performance in noisy settings (\cref{tab:noise_robustness_cifar100_r50_min,fig:sample}).

\begin{figure}
    \centering
    \includegraphics[width=0.5\linewidth]{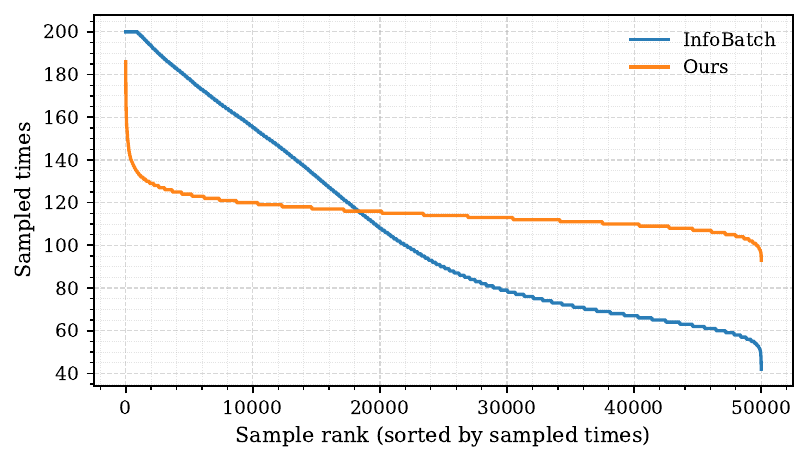}
    \caption{Per-sample usage statistics under 20\% symmetric label noise on CIFAR-100 (ResNet-50): samples are sorted by sampled times in descending order.
Loss-driven selection exhibits a more concentrated usage distribution, while our method yields a flatter curve, indicating more even sample participation and reduced long-term selection bias.}
    \label{fig:sample}
\end{figure}

% requires: \usepackage{booktabs}
% requires: \usepackage{booktabs}
\begin{table}[t]
\centering
\caption{Noise robustness on CIFAR-100 with ResNet-50 under 20\% label noise.
We report top-1 accuracy (\%) for clean and noisy training, and the drop $\Delta$ (Clean -- Noisy).}
\label{tab:noise_robustness_cifar100_r50_min}
\setlength{\tabcolsep}{8pt}
\begin{tabular}{@{}lccc|ccc@{}}
\toprule
\textbf{Setting}
& \multicolumn{3}{c}{\textbf{Selection ratio = 20\%}}
& \multicolumn{3}{c}{\textbf{Selection ratio = 30\%}} \\
\cmidrule(lr){2-4}\cmidrule(lr){5-7}
& \textbf{Clean} & \textbf{Noisy} & $\boldsymbol{\Delta}$
& \textbf{Clean} & \textbf{Noisy} & $\boldsymbol{\Delta}$ \\
\midrule
CLIP
& 54.10 & 46.05 & 8.05
& 61.90 & 58.34 & 3.56 \\
MoSo
& - & 31.01 & -
& 58.27 & 43.73 & 14.54 \\
MoDS
& 51.89 & 40.25 & 11.64
& 56.00 & 48.53 & 7.47 \\
InfoBatch
& 56.90 & 54.49 & 2.41
& 63.81 & 55.67 & 8.14 \\
\textbf{Ours}
& \textbf{64.29} & \textbf{60.33} & 3.96
& \textbf{75.70} & \textbf{60.86} & 14.84 \\
\bottomrule
\end{tabular}
\end{table}

%%%%%%%%%%%%%%%%%%%%%%%%TODO%%%%%%%%%%%%%%%%%%%%%%%%%%%%%%%%%%%%%%
\section{Experimental Details}
\subsection{Datasets and Tasks}

We evaluated the effectiveness of our approach on multiple image classification datasets, including CIFAR-10/100, Tiny-ImageNet, and ImageNet-1K.

\textbf{CIFAR-10/100.} Two classic image datasets, each comprising 50,000 training images and 10,000 test images. Images have a resolution of 32×32 and include 10/100 classes of natural images such as apples, bicycles, fish, etc.

\textbf{Tiny-ImageNet.} Tiny-ImageNet is a downscaled variant of the ImageNet dataset, containing 200 object categories. Each image is resized to 64 × 64 pixels and in RGB format. The dataset consists of 100,000 training images, 10,000 validation images, and 10,000 test images (with labels withheld in the official test set). It is commonly used for prototyping vision models under moderate computational budgets.
\begin{figure}[t]
\centering
\begin{subfigure}[b]{0.49\linewidth}
    \centering
    \includegraphics[width=\linewidth]{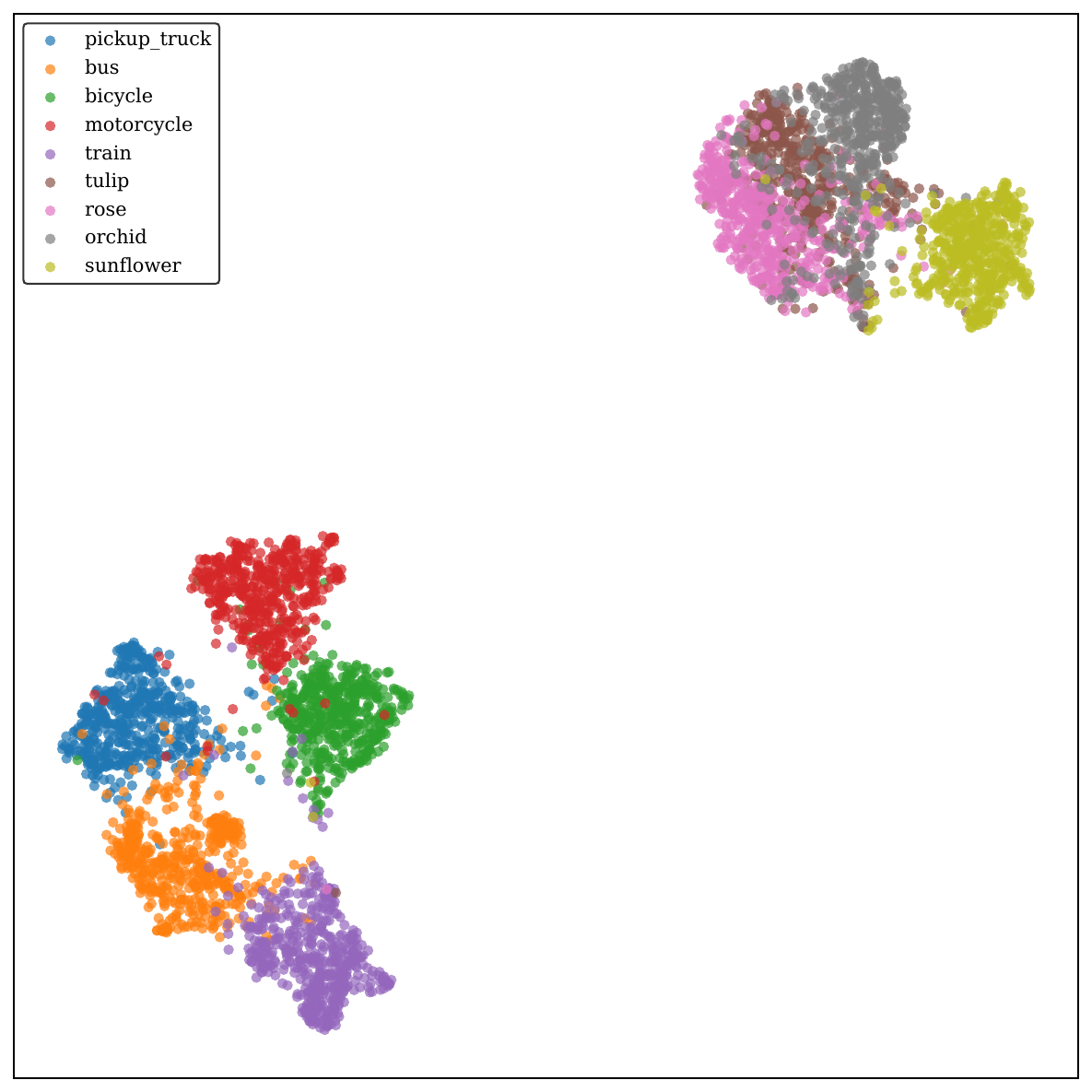}
    \caption{UMAP-latent space}
    \label{fig:umap1}
\end{subfigure}
\hfill
\begin{subfigure}[b]{0.49\linewidth}
    \centering
    \includegraphics[width=\linewidth]{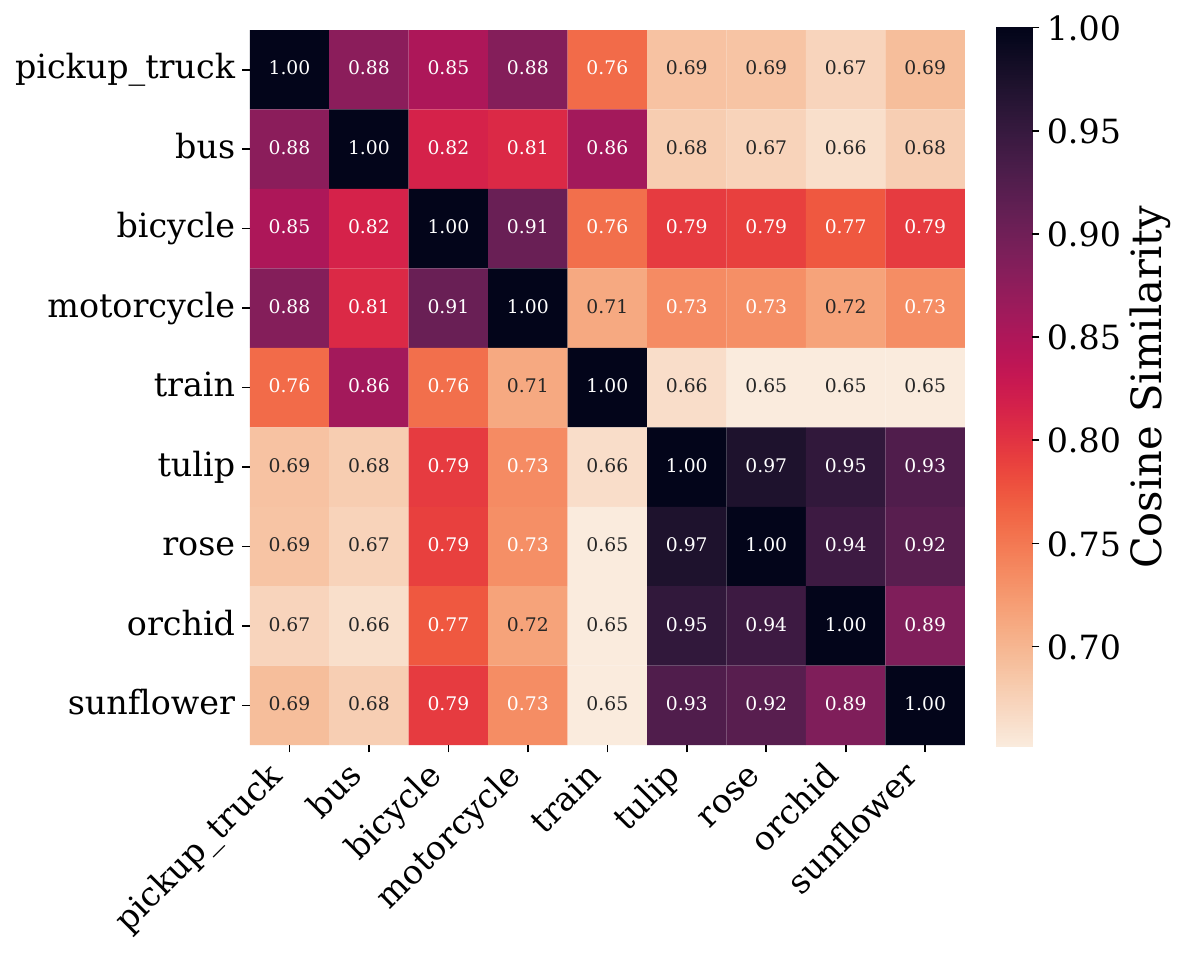}
    \caption{Class semantic relationship of SAE activations}
    \label{fig:umap2}
\end{subfigure}
\caption{Visualization of the SAE latent space. (a) presents the UMAP dimensionality reduction analysis of partial vehicle and plant classes. We observe that classes with similar semantic information are closely distributed in the SAE latent space, while those with high semantic discriminability are far apart. The heatmap in (b) further validates the successful training of our SAE.}
\end{figure}
\textbf{ImageNet-1K.} ImageNet-1K (also known as ILSVRC-2012) is a widely used benchmark dataset for large-scale image classification. It contains 1,000 object categories and over 1.2 million training images, along with 50,000 validation images and a held-out 100,000-image test set (labels unavailable publicly). Input images are of varying resolutions, typically center-cropped and resized to 224 × 224 pixels for standard evaluation. Due to its scale and diversity, ImageNet-1K serves as a cornerstone for evaluating representation learning, model architectures, and training strategies in computer vision.

\textbf{RSD\_15K.} RSD-15K is a large-scale, user-annotated dataset for suicide risk detection on social media. It contains roughly 15,000 user-level post histories, making it one of the largest datasets of its kind. Each user is associated with a complete chronological posting timeline to support analysis of risk evolution over time, and annotations follow a four-level risk grading scheme for finer-grained assessment. Labels were produced under expert supervision with multiple rounds of cross-validation to improve reliability and consistency.
\subsection{BackBones}
We evaluate our method on a diverse set of downstream architectures, spanning both convolutional networks and vision transformers, to verify that our selection scores transfer across model families.
For CNN backbones, we include ResNet-18/34/50 implemented with the standard BasicBlock/Bottleneck configurations, as well as VGG-16.
For transformer backbones, we consider two ViT variants: (i) a ViT-B/16 initialized from ImageNet-1K pretrained weights with the classification head replaced to match the target label space, and (ii) a lightweight ViT-Small tailored for $32\times 32$ images, trained under the same experimental protocol.
Unless otherwise noted, all backbones are trained with identical optimization settings, and only the per-epoch data stream is changed according to the selection strategy.

\begin{figure}
    \centering
    \includegraphics[width=1\linewidth]{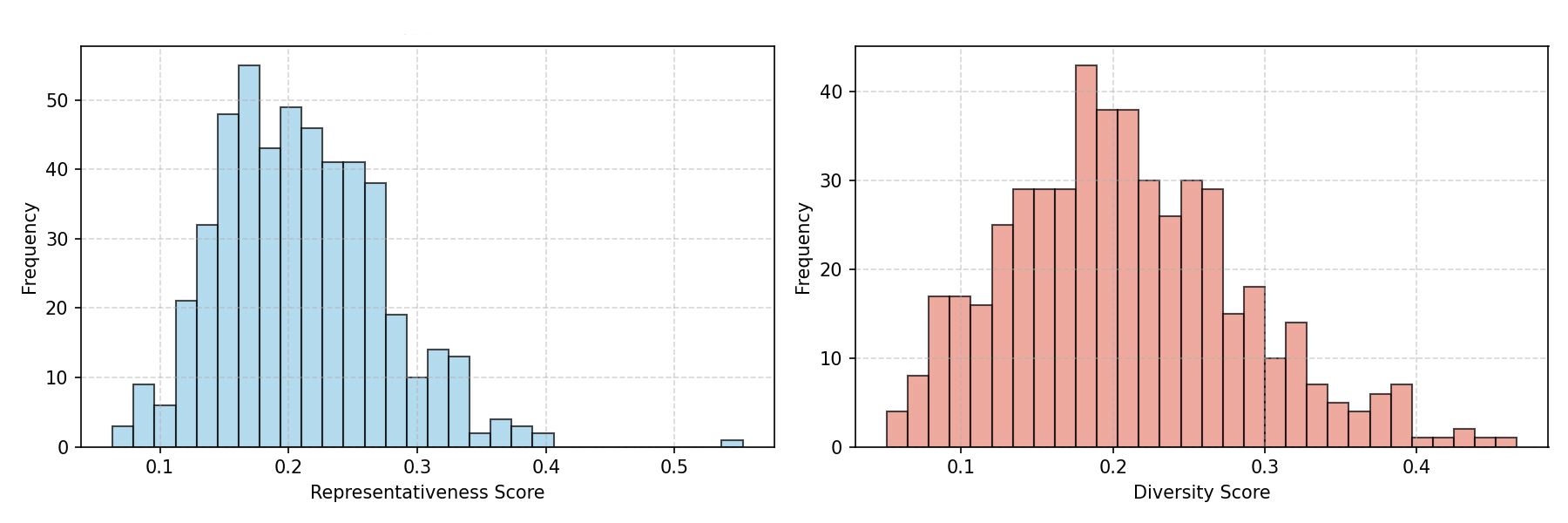}
    \caption{The score distribution of our method.}
    \label{fig:score}
\end{figure}

\subsection{Implement Details}
\label{sec:implement}
For experiments on CIFAR10/100 and TinyImageNet, we employed a single NVIDIA RTX A6000 GPU. All models were implemented from scratch following the official architectures, with dataset loading performed using the official torchvision library. Except for ViT, all implementations employed the SGD optimizer with momentum set to 0.9. For ResNet architectures, we applied weight decay of 5e-4, a batch size of 128, and an initial learning rate of 0.1. For VGG architectures, we used weight decay of 1e-4, a batch size of 256, and an initial learning rate of 0.1. None of the experiments utilized a warm-up. Unless otherwise specified, the Cos Annealing learning rate scheduler is employed.

For our proposed method, some parameters can be easily set. The minimum weights of the representativeness score $\alpha_{\min}=0.2$, $t_{mid}=0.6$, $k=0.05$, and $\lambda=0.2$ are default set if not specified. Unless otherwise specified, images have only undergone common augmentation operations such as normalization, random cropping, and horizontal flipping. Following previous research, we trained ResNet-18/50 on CIFAR-10/100, with an SGD optimizer with a momentum of 0.9, weight decay of $5e-4$, a batch size of 128, an initial learning rate of 0.1, and a total training epoch of 200 and VGG-16, with an SGD optimizer with a momentum of 0.9, a weight decay of $1e-4$, a batch size of 128, an initial learning rate of 0.1, and a total training epoch of 200. We trained ResNet-34 on ImageNet-1K using SGD with a momentum of 0.9, a weight decay of $5\times10^{-4}$, a batch size of 256, an initial learning rate of 0.1, and a total training epoch of 90.

%%%%%%%%%%%%%%%%%RSD Details%%%%%%%%%%%%%%%%%%%
We fine-tune roberta-base \cite{Liu2019RoBERTaAR} for RSD\_15K. 
We use the official train/test split, and further split the test set into validation and final test with a 50/50 stratified split for checkpoint selection and reporting.
Our method is instantiated as a static selector on the training split: we compute a combined score $s_i=\mathrm{Rep}(i)+\cdot \mathrm{Div}(i)$ and select the top 30\% training samples for fine-tuning (3 epochs, AdamW, lr $2\times10^{-5}$, batch size 16), while evaluation always uses the full validation/test data.

%%%%%%%%%%%%%%%%ImageNet-1K%%%%%%%%%%%%%%%%%%%
We evaluate on ImageNet-1K using \texttt{ImageFolder} with the standard train/val split.
Images are preprocessed with RandomResizedCrop$(224)$ and RandomHorizontalFlip for training, and Resize$(256)$ + CenterCrop$(224)$ for validation, followed by ImageNet normalization.
The downstream classifier is a ResNet-34 built with the standard \texttt{BasicBlock} design, trained for 90 epochs with SGD (momentum 0.9, weight decay $5\times 10^{-4}$, Nesterov) and CosineAnnealingLR ($T_{\max}=90$, $\eta_{\min}=10^{-6}$).
All runs use distributed training with \texttt{torchrun} and \texttt{FullyShardedDataParallel}; we adopt \texttt{DistributedSampler} for both train and val and aggregate validation metrics via all-reduce across ranks.
%%%%%%%%%%%%%%%%%Tiny ImageNet%%%%%%%%%%%%%%%%%%%
\subsection{Tiny-ImageNet Results}
\label{app:tiny}
For Tiny-ImageNet, we followed the settings of InfoBatch~\cite{qin2024infobatch}, using batch size 128, image
resize 64, SGD as the optimizer, and a learning rate of 0.1 at the start of cosine annealing. We compare against the SOTA methods on ResNet50 and VGG-16 respectively, and our approach surpasses existing SOTA methods in terms of both time efficiency and performance improvement.

% --- Tiny-ImageNet SOTA comparison table (ICML-ish style, no GPU time) ---
% Requires: \usepackage{booktabs}
\begin{table}[t]
\centering
\caption{\textbf{Tiny-ImageNet: comparison against backbone-specific SOTA baselines.}
For each backbone (ResNet-50 / VGG-16), we compare with the strongest prior method we reproduce.
\textbf{Cost} is the normalized training cost (Full-data $=1.00\times$), and \textbf{Speedup}$=1/\text{Cost}$.}
\label{tab:tiny_sota_backbone}
\setlength{\tabcolsep}{6pt}
\renewcommand{\arraystretch}{1.08}
\small
\begin{tabular}{l l c c}
\toprule
\textbf{Backbone} & \textbf{Method} & \textbf{Acc. (\%)} & \textbf{Speedup} \\
\midrule
\multirow{3}{*}{ResNet-50}
& Full-data & 63.5  & 1.00$\times$ \\
& InfoBatch & 63.4  & \textbf{1.71$\times$} \\
& \textbf{Ours} & \textbf{63.7} & 1.26$\times$ \\
\midrule
\multirow{3}{*}{VGG-16}
& Full-data & 58.21 & 1.00$\times$ \\
& CLIP & 56.34 & 1.10$\times$ \\
& \textbf{Ours} & \textbf{58.89}& \textbf{1.34$\times$} \\
\bottomrule
\end{tabular}
\vspace{2pt}
\end{table}

% --- helper macro for placeholders (optional) ---
% Put this in the preamble if you want a consistent placeholder style:
% \usepackage{xcolor}
% \newcommand{\placeholder}[1]{\textcolor{gray}{#1}}

\begin{figure*}
    \centering
    \includegraphics[width=1\linewidth]{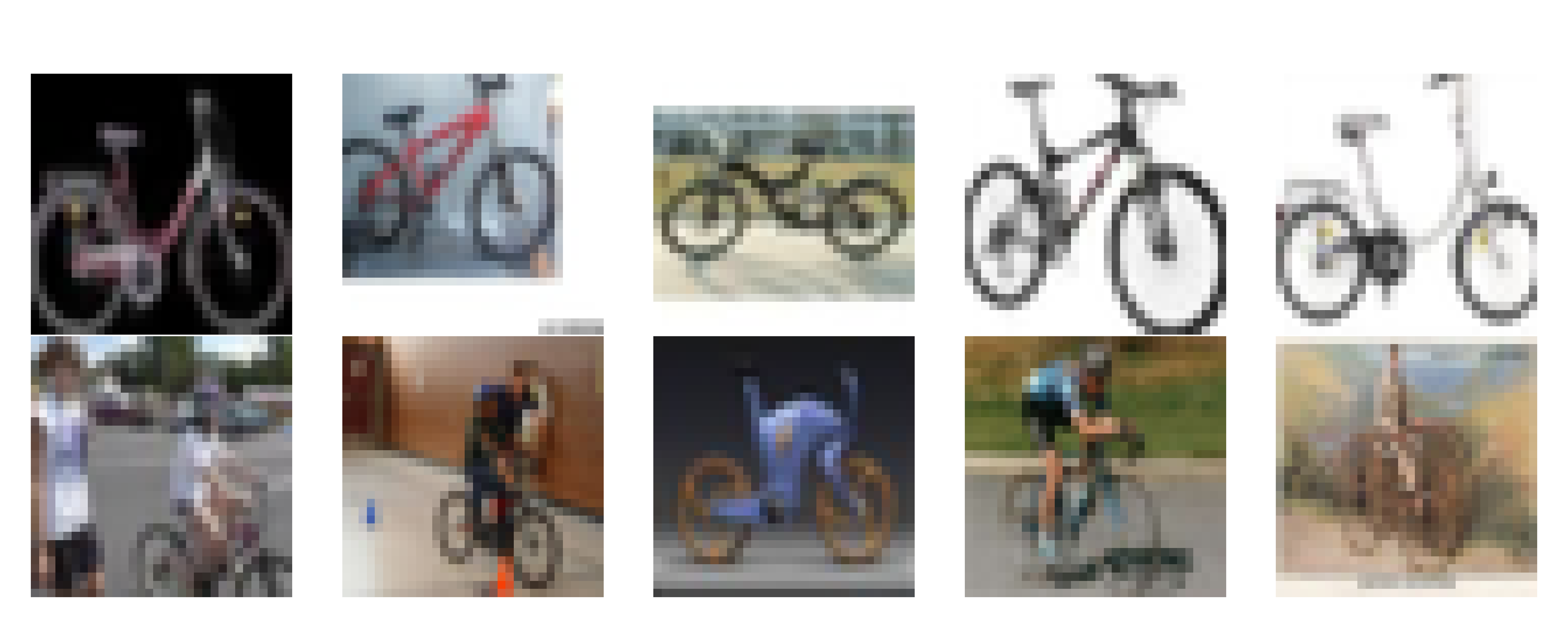}
    \caption{Visualization of the images of class 8 with the highest representativeness and diversity score. The top ones are of the highest representativeness score, while the bottom images display high-diversity samples.}
    \label{fig:bicycle}
\end{figure*}

\subsection{Robustness to Learning-Rate Schedulers}
\label{app:scheduler}

We report results under three widely used learning-rate schedulers: MultiStepLR, CosineAnnealingLR, and OneCycleLR.

\paragraph{MultiStepLR.}
We follow the standard configuration in~\cite{yang2024clip}, dividing the learning rate by $5$ at epochs $\{60,120,180\}$.

\paragraph{OneCycleLR.}
We follow the setup in~\cite{qin2024infobatch} and use the same total training budget.

\paragraph{Results.}
As shown in \cref{tab:scheduler}, our method maintains comparable (often slightly higher) accuracy to full-data training across all schedules, while consistently reducing training time.
These results indicate that the efficiency gains are stable under different learning-rate schedules and require no schedule-specific tuning.

\begin{table}[t]
\centering
\caption{\textbf{Top-1 accuracy (\%)} and training time (GPU hours) on CIFAR-100 (ResNet-18) under different learning-rate schedulers.
$\Delta$Acc denotes the accuracy difference relative to full-data training under the same scheduler. Lower time is better.}
\label{tab:scheduler}
\setlength{\tabcolsep}{6pt}
\begin{tabular}{@{}l c c c c@{}}
\toprule
\textbf{LR Scheduler} & \textbf{Method} & \textbf{Acc} & \textbf{$\Delta$Acc} & \textbf{Time} \\
\midrule
\multirow{2}{*}{MultiStep} & Full  & 77.3 & --   & 0.7 \\
                          & Ours  & \textbf{77.6} & +0.3 & \textbf{0.4} \\
\midrule
\multirow{2}{*}{OneCycle}  & Full  & 78.2 & --   & 0.7 \\
                          & Ours  & \textbf{78.2} & +0.0 & \textbf{0.4} \\
\midrule
\multirow{2}{*}{CosAnneal} & Full  & 78.2 & --   & 0.7 \\
                          & Ours  & \textbf{78.4} & +0.2 & \textbf{0.4} \\
\bottomrule
\end{tabular}
\end{table}

\begin{figure}[h]
    \centering
    \includegraphics[width=1\linewidth]{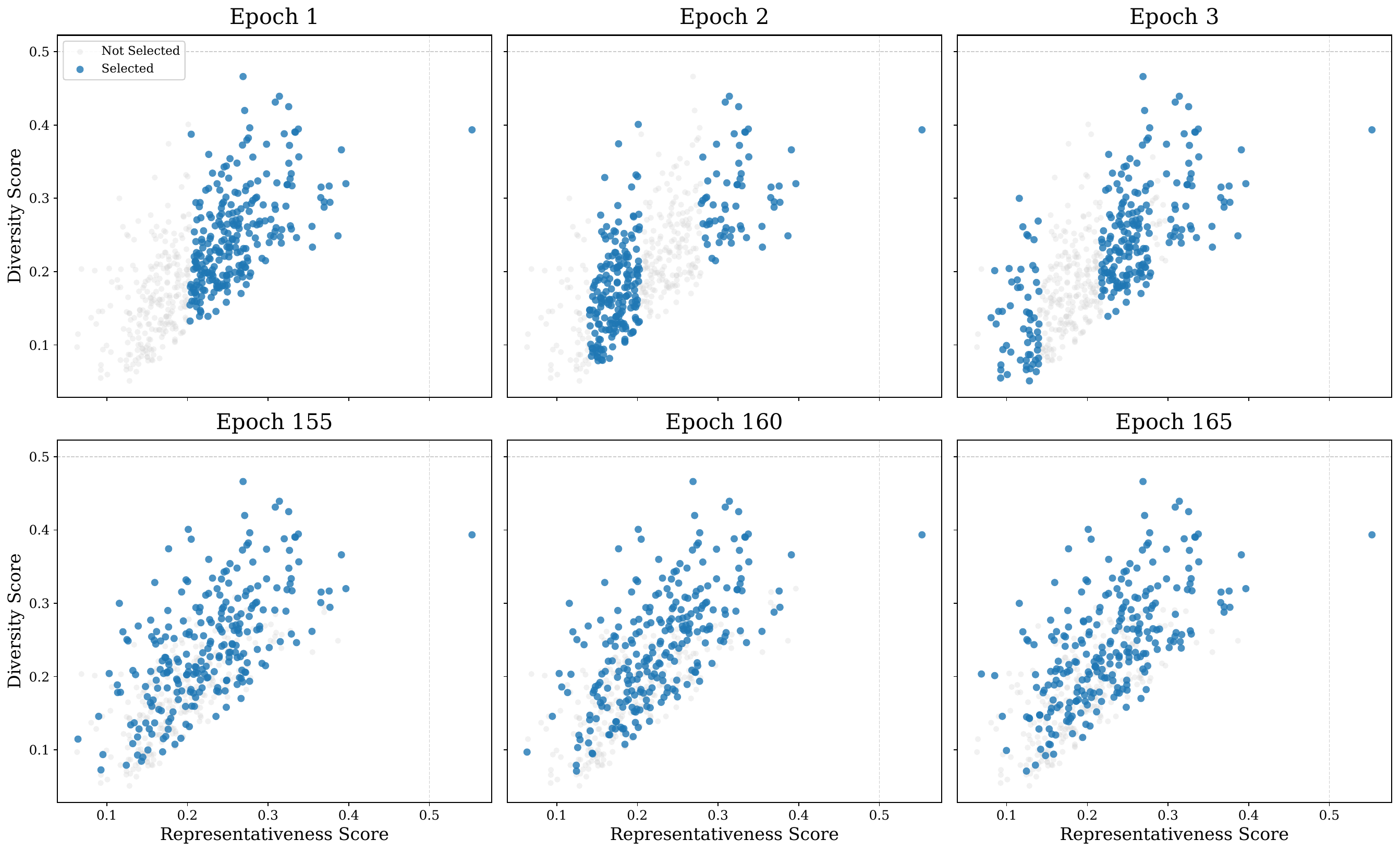}
    \caption{In the early stages of training, samples with high representativeness scores are primarily selected, while in the middle and late stages, samples are chosen by comprehensively considering both representativeness and diversity scores. The continuous epoch sampling visualization in the figure also intuitively demonstrates how our usage-frequency penalty functions.}
    \label{fig:ev}
\end{figure}

\section{Additional visualization}

\subsection{Selection Dynamics Visualization}

To gain deeper insight into our semantics-aware data selection, we conduct three complementary visual analyses.

First, temporal sampling dynamics \cref{fig:ev} track how samples from class 77 are selected across epochs. Early epochs (1–80) predominantly choose images with a high representativeness score, while later epochs (120–180) increasingly incorporate challenging cases manifesting high diversity. This evolution validates our design: the framework smoothly transitions from representativeness learning to boundary exploration. Simultaneously, our usage-based penalties play a crucial role across different stages. As shown in \cref{fig:ev}, our sampling pattern evolves with the sampling epoch—shifting from representation-based scoring to a balanced approach that integrates both representation and diversity scores. Concurrently, penalties introduce distinct samples within adjacent epochs to ensure sampling coverage, thereby safeguarding process-level diversity.

Secondly, our score distribution analysis reveals the unique characteristics of each metric. To eliminate dimensional differences, we normalized both scores using the min-max method before applying them to the data selection process. The figure below illustrates the distribution of our two scores. As seen, both scores exhibit near-normal distributions with means clustered around 0.2. This ensures a relatively unified scoring system, preventing any single score with an excessively large absolute value from disproportionately influencing the overall composite score.

\subsection{Qualitative Examples}
Analysis in \cref{fig:bicycle} of high-scoring samples revealed the semantic meanings underlying the ratings. The top five images showcase high-representativeness score samples, while the bottom images display high-diversity score samples. The most representative bicycle images consistently feature a frontal perspective, clear outlines, simple backgrounds, and unobstructed views of the entire vehicle—core characteristics of this category. Conversely, the most diverse samples exhibit semantic variability: obscured bicycles, cyclists on bikes, blurred images, and complex backgrounds. These are not outliers but marginally valuable cases that optimize decision boundaries.

\section{Related Work}
\label{sec:related2}
\textbf{Data Selection.} Existing data selection methods can be categorized into static and dynamic approaches based on their selection frequency. 

Static methods typically evaluate sample importance offline and compute a corresponding score for each sample in a one-off manner using predefined criteria, such as representativeness, diversity, importance, and difficulty. Relevant works include K-Center \cite{sener2018activelearningconvolutionalneural}, one of the primary baselines we compare against in the main text, which performs sample selection by minimizing the maximum distance from each sample to its nearest cluster centroid. Herding \cite{10.1145/1553374.1553517} is another geometry-based static selection method that selects samples whose distribution is close to that of the full dataset by minimizing the distance between the subset centroid and the full dataset centroid. Early static methods usually evaluate samples along a single dimension, while methods represented by G-DIG \cite{pan2024gdiggradientbaseddiversehighquality} jointly consider sample importance and diversity during subset selection—their diversity control is ensured via gradient clustering and random sampling after screening samples by representativeness. The CLIP \cite{yang2024clip} method treats diversity as an indicator equally important to representativeness, an approach that effectively prevents the a priori bias of subset diversity caused by considerations of importance or representativeness alone. InfoMax \cite{tan2025data} formulates data selection as a discrete quadratic programming problem of maximizing information and minimizing redundancy, where sample importance quantifies the information content of individual samples and pairwise sample similarity measures information overlap, thus achieving a balance between importance and diversity. This further explores the way to balance the roles of representativeness and diversity in data selection.

Dynamic methods generally achieve a better trade-off between training acceleration and accuracy. Current dynamic methods primarily assign dynamic scores to samples using instantaneous model signals. UCB and $\epsilon$-Greedy \cite{raju2021acceleratingdeeplearningdynamic} select samples that are challenging for the current model based on sample uncertainty, while InfoBatch \cite{qin2024infobatch} focuses on sample loss information and dynamically selects samples with high loss values for training. Leveraging instantaneous model signals helps identify samples that are difficult for the model in its current state, yet it undoubtedly also increases the likelihood of sampling noisy samples on noisy datasets. Distinct from existing methods, we are the first to introduce representativeness and diversity scoring into dynamic data selection, and we design a downstream model-agnostic dynamic data selection framework via a combination of offline scoring and online scheduling. Meanwhile, we measure representativeness at the feature level and diversity at the process level, and we innovatively introduce a usage-frequency penalty to facilitate sample rotation. The advantages and detailed implementation of our method are elaborated in the main text and are not repeated here.

\textbf{Sparse Autoencoders.} Sparse autoencoders are unsupervised representation learners\cite{gao2024scalingevaluatingsparseautoencoders} that impose explicit sparsity constraints—e.g., L1 regularization or hard k-sparse activation—on hidden units, forcing the network to reconstruct inputs using only a small subset of neurons. This induces localized, disentangled, and highly discriminative feature bases, where individual neurons tend to specialize in coherent semantic concepts (e.g., “wheel”, “fur texture”) rather than entangled low-level patterns. As a result, SAEs substantially improve interpretability and generalization while intrinsically mitigating overfitting—making them especially well-suited for compressing and structuring high-dimensional, redundant embeddings. Since the advent of K-Sparse Autoencoders\cite{makhzani2014ksparseautoencoders} (K-SAE) in 2013, SAEs have been widely adopted in image denoising, anomaly detection, and representation pretraining. More recently, they have emerged as a cornerstone in large model interpretability: researchers decompose LLM internal activations into human-annotatable “semantic neurons”\cite{cunningham2023sparseautoencodershighlyinterpretable} (e.g., “geographic entity”, “syntactic role”) and enable targeted concept editing. The data selection method based on SAE is not the first of its kind. Previous research\cite{yang2025diversitydrivendataselectionlanguage} demonstrated that SAE can be applied to data selection by maximizing the number of activated neurons in a subset, thereby ensuring the diversity of the selected subset. Lou's work\cite{lou2025saevinterpretingmultimodalmodels} was the first to extend the SAE-based data selection method to the multimodal domain, where SAE was used to compare the degree of image-text alignment among samples.

% \begin{wrapfigure}{r}{0.42\linewidth}
%   \vspace{-6pt}
%   \centering
%   \includegraphics[width=\linewidth]{fig/lambda.png}
%   \caption{The visualization of the impact of $\lambda$ on data sampling}
%   \label{fig:lambda}
%   \vspace{-10pt}
% \end{wrapfigure}

\end{document}